\newcommand{\comment}[1]{}
\def\E{\mathbb{E}}
\def\qpi{Q^{\pi}}
\def\mdp{\mathcal{M}}
\def\init{\mu_0}
\def\visitpi{d^\pi}
\def\visitrb{d^\Dset}
\DeclareMathOperator{\avgstep}{\rho}
\def\bellman{\mathcal{B}}
\def\Sset{S}
\def\Aset{A}
\def\Dset{\mathcal{D}}
\def\defeq{:=}
\def\rew{R}
\def\bellman{\gamma\cdot \mathcal{P}^{\pi}}
\def\bellmant{\gamma\cdot \mathcal{P}^{\pi}_{*}}
\def\bellmannog{\mathcal{P}^{\pi}}
\def\bellmantnog{\mathcal{P}^{\pi}_{*}}
\def\qvar{Q}
\def\dvar{d}
\def\initsamp{\substack{a_0\sim\pi(s_0) \\ s_0\sim\init}}
\def\sampone{\substack{(s,a)\sim\visitrb}}
\def\samptwo{\substack{(s,a,r,s')\sim\visitrb \\ a'\sim\pi(s')}}
\def\ie{\emph{i.e.}\xspace}
\def\eg{\emph{e.g.}\xspace}
\def\dlp{$d$-LP\xspace}
\def\qlp{$Q$-LP\xspace}
\renewcommand{\cite}{\citep}
\newcommand{\positive}{{\color{Turquoise}\ge 0}}
\newcommand{\norm}{{\color{green}\lambda}}
\newcommand{\reward}{{\color{Plum}\alpha_R}}
\newcommand{\preg}{{\color{magenta}\alpha_Q}}
\newcommand{\dreg}{{\color{Orange}\alpha_\zeta}}
\title{Off-Policy Evaluation via the Regularized Lagrangian}
\author{
  $^*$Mengjiao Yang$^1$, \thanks{indicates equal contribution. Email: \texttt{\{sherryy, ofirnachum, bodai\}@google.com}.}
  Ofir Nachum$^1$, $^*$Bo Dai$^1$\\\vspace{-2mm}
  Lihong Li$^1$, Dale Schuurmans$^{1,2}$\\ \vspace{3mm}
  $^1$Google Research, Brain Team \quad $^2$University of Alberta
}
\begin{document}

\date{}
\maketitle


\begin{abstract}
  The recently proposed \emph{distribution correction estimation} (DICE) family of estimators has advanced the state of the art in off-policy evaluation from behavior-agnostic data. While these estimators all perform some form of stationary distribution correction, they arise from different derivations and objective functions. In this paper, we unify these estimators as regularized Lagrangians of the same linear program. The unification allows us to expand the space of DICE estimators to new alternatives that demonstrate improved performance. More importantly, by analyzing the expanded space of estimators both mathematically and empirically we find that dual solutions offer greater flexibility in navigating the tradeoff between optimization stability and estimation bias, and generally provide superior estimates in practice.
\end{abstract}

\section{Introduction}\label{sec:introduction}
One of the most fundamental problems in reinforcement learning (RL) is
\emph{policy evaluation},
where we seek to estimate 
the expected long-term payoff of a given
\emph{target} policy 
in a decision making environment.
An important variant of this problem,
\emph{off-policy evaluation} (OPE)~\cite{Precup00ET},
is motivated by applications
where deploying a policy in a live environment
entails significant cost or risk~\cite{Murphy01MM,Thomas15HCPE}.
To circumvent these issues,
OPE attempts to estimate the value of a target policy
by referring only to a dataset of experience previously gathered
by other policies in the environment.
Often, such logging or \emph{behavior} policies are not known explicitly
(\eg, the experience may come from human actors),
which necessitates the use of \emph{behavior-agnostic} OPE
methods~\cite{NacChoDaiLi19}.

While behavior-agnostic OPE appears to be a daunting problem,
a number of estimators have recently been developed for this scenario
\cite{NacChoDaiLi19,uehara2019minimax,zhang2020gendice,zhang2020gradientdice},
demonstrating impressive empirical results.
Such estimators, known collectively as the ``DICE'' family
for \emph{DIstribution Correction Estimation},
model the ratio between the propensity 
of the target policy to visit particular state-action pairs
relative to their likelihood of appearing in the logged data.
A distribution corrector of this form can then be directly used
to estimate the value of the target policy.

Although there are many commonalities between the various DICE estimators,
their derivations are distinct and seemingly incompatible.
For example, \emph{DualDICE}~\cite{NacChoDaiLi19}
is derived by a particular change-of-variables technique, 
whereas \emph{GenDICE}~\cite{zhang2020gendice}
observes that the substitution strategy cannot work in the average reward
setting,
and proposes a distinct derivation based on distribution matching.
\emph{GradientDICE}~\cite{zhang2020gradientdice} notes
that GenDICE exacerbates optimization difficulties, 
and proposes a variant designed for limited sampling capabilities.
Despite these apparent differences in these methods,
the algorithms all involve a minimax optimization that has
a strikingly similar form,
which suggests that there is a common connection 
that underlies
the alternative derivations.

We show that the previous DICE formulations are all in fact
equivalent to regularized Lagrangians of the same linear program (LP).
This LP shares an intimate relationship with the policy evaluation problem,
and has a primal form we refer to as the \qlp and a dual form we refer
to as the \dlp.
The primal form has been concurrently identified and
studied in the context of policy optimization~\cite{algae},
but we focus on the \dlp formulation for off-policy evaluation here,
which we find to have a more succinct and revealing form for this purpose.
Using the \dlp, we identify a number of key choices in translating it into
a \emph{stable} minimax optimization problem -- 
\ie whether to include redundant constraints,
whether to regularize the primal or dual variables
-- in addition to choices in how to translate an optimized solution into
an \emph{asymptotic unbiased}, ``unbiased'' for short, estimate of the policy value.
We use this characterization to show that the known members of the DICE family
are a small subset of specific choices made within a much larger,
unexplored set of potential OPE methods.

To understand the consequences of the various choices,
we provide a comprehensive study.
First, we theoretically investigate which configurations
lead to bias in the primal or dual solutions,
and when this affects the final estimates.
Our analysis shows that the dual solutions
offer greater flexibility in stabilizing the optimization
while preserving asymptotic unbiasedness,
versus primal solutions. We also perform an extensive empirical evaluation of the various choices
across different domains and function approximators,
and identify novel configurations that improve 
the observed
outcomes.

\section{Background}\label{sec:background}
We consider an infinite-horizon Markov Decision Process
(MDP)~\citep{puterman1994markov}, specified by a tuple
$\mdp = \langle \Sset, \Aset, \rew, T, \init, \gamma \rangle$, 
which consists
of a state space, action space, reward function,
transition probability function, initial state distribution,
and discount factor $\gamma\in [0, 1]$.%
\footnote{
For simplicity, we focus on the discounted case where $\gamma\in[0,1)$ 
unless otherwise specified.
The same conclusions generally hold for the undiscounted case with $\gamma = 1$;
see \appref{appendix:undiscounted} for more details.
} 
A policy $\pi$ interacts with the environment
starting at an initial state $s_0 \sim \init$,
producing 
a distribution $\pi(\cdot|s_t)$ over $\Aset$
from which an action $a_t$ is sampled
and applied to the environment at each step $t \ge 0$.
The environment produces a scalar reward $r_t=\rew(s_t, a_t)$,\footnote{
We consider a a deterministic reward function. All of our results apply to stochastic rewards as well.
} 
and transitions to a new state $s_{t+1} \sim T(s_t, a_t)$.

\subsection{Policy Evaluation}
The \emph{value} of a policy $\pi$ is defined as the normalized expected
per-step reward it obtains:
\begin{equation}
    \displaystyle \avgstep(\pi) \defeq (1-\gamma) \E\left[\left.\textstyle\sum_{t=0}^\infty \gamma^t \rew(s_t,a_t) ~\right| s_0\sim\init, \forall t, a_t\sim \pi(s_t), s_{t+1}\sim T(s_t, a_t)\right].
    \label{eqn:avgstep}
\end{equation}
In the policy evaluation setting,
the policy being evaluated is referred to as the \emph{target} policy.
The value of a policy may be expressed in two equivalent ways:
\begin{equation}
  \avgstep(\pi) = (1-\gamma) \cdot \E_{\initsamp}[\qpi(s_0,a_0)] = \E_{(s,a)\sim\visitpi}[R(s,a)],
\end{equation}
where $\qpi$ and $\visitpi$ are the \emph{state-action values} and 
\emph{visitations}
of $\pi$, respectively, 
which satisfy
{\small
\begin{equation}\label{eq:bellman-q}
  \textstyle\qpi(s,a) = \rew(s,a) + \bellman \qpi(s,a),
    \text{ where }  \bellmannog \qvar(s,a) \defeq \E_{s'\sim T(s,a),a'\sim\pi(s')}[\qvar(s',a')]~, 
\end{equation}}
{\small
\begin{equation}\label{eq:bellman-d}
  \textstyle\visitpi(s,a) = (1-\gamma)\init(s)\pi(a|s) + \bellmant\visitpi(s,a),
    \text{ where }  \bellmantnog \dvar(s,a) \defeq \pi(a|s) \sum_{\tilde{s},\tilde{a}} T(s|\tilde{s},\tilde{a})\dvar(\tilde{s},\tilde{a}).   \
\end{equation}}

Note that $\bellmannog$ and $\bellmantnog$ are linear operators
that are transposes (adjoints) of each other. 
We refer to $\bellmannog$ as the \emph{policy transition operator} and 
$\bellmantnog$ as the \emph{transpose policy transition operator}. 
The function $\qpi$ corresponds to the $Q$-values of the policy $\pi$;
it maps state-action pairs $(s,a)$ to the expected value of policy $\pi$
when run in the environment starting at $(s,a)$.
The function $\visitpi$ corresponds to the on-policy distribution of $\pi$;
it is the normalized distribution over state-action pairs $(s,a)$
measuring the likelihood 
$\pi$ enounters
the pair $(s,a)$,
averaging over time via $\gamma$-discounting.
We make the
following standard assumption, which is common in previous policy evaluation
work~\citep{zhang2020gendice,algae}.
\begin{assumption}[MDP ergodicity]\label{asmp:mdp_reg}
There is unique fixed point solution to~\eqref{eq:bellman-d}. 
\end{assumption}
When $\gamma \in [0, 1)$, \eqref{eq:bellman-d} always has a unique solution,
as $0$ cannot belong to the spectrum of
$I - \gamma \bellmantnog$.
For $\gamma\!=\!1$,
the assumption reduces to ergodicity for discrete case under a restriction of $d$ to a normalized distribution;
the continuous case is treated by~\citet{meyn2012markov}. 

\subsection{Off-policy Evaluation via the DICE Family}
Off-policy evaluation (OPE) aims to estimate $\avgstep(\pi)$ using only a
\emph{fixed} dataset of experiences. 
Specifically, we assume access to a finite dataset
$\Dset=\{(s_0^{(i)}, s^{(i)},a^{(i)},r^{(i)},s^{\prime(i)})\}_{i=1}^N$,
where $s_0^{(i)}\sim\init$, $(s^{(i)},a^{(i)})\sim\visitrb$
are samples from some unknown distribution $\visitrb$,
$r^{(i)}=\rew(s^{(i)},a^{(i)})$,
and $s^{\prime(i)}\sim T(s^{(i)},a^{(i)})$.
We at times abuse notation and use $(s,a,r,s')\sim\visitrb$ or
$(s,a,r)\sim\visitrb$ as a shorthand for $(s,a)\sim\visitrb, r=R(s,a),
s'\sim T(s, a)$,
which simulates sampling from the dataset $\Dset$ when using a
finite number of samples. 

The recent DICE methods take advantage of the following expression
for the policy value:
\begin{equation}
    \label{eq:avgstep-ratio}
    \textstyle
    \avgstep(\pi) = \E_{(s,a,r)\sim\visitrb}\left[\zeta^*(s,a)   \cdot r \right],
\end{equation}
where $\zeta^*\rbr{s, a}\defeq\visitpi(s,a)/\visitrb(s, a)$
is the \emph{distribution correction ratio}.
The existing DICE estimators seek to approximate this ratio 
without knowledge of $\visitpi$ or $\visitrb$, 
and then apply~\eqref{eq:avgstep-ratio}
to derive an estimate of $\avgstep(\pi)$.
This general paradigm is supported by the following assumption.
\begin{assumption}[Boundedness]\label{asmp:bounded_ratio}
The stationary correction ratio is bounded, $\nbr{\zeta^*}_\infty\le C<\infty$.
\end{assumption}
When $\gamma < 1$,
DualDICE~\citep{NacChoDaiLi19} chooses a convex objective
whose optimal solution corresponds to this ratio,
and employs a change of variables
to transform the dependence
on $\visitpi$ to
$\init$.
GenDICE~\citep{zhang2020gendice}, on the other hand,
minimizes a divergence between
successive on-policy state-action distributions,
and introduces a normalization constraint
to ensure the estimated ratios average to $1$ over the off-policy dataset.
Both DualDICE and GenDICE apply Fenchel duality to reduce an intractable
convex objective to a minimax objective,
which enables sampling and optimization in a stochastic or
continuous action space. 
GradientDICE~\citep{zhang2020gradientdice} extends GenDICE
by using a linear parametrization so that the minimax optimization
is convex-concave with convergence guarantees. 

\section{A Unified Framework of DICE Estimators}\label{sec:unification}

In this section, given a fixed target policy $\pi$, we present a linear programming representation (LP) of its state-action stationary distribution $d^\pi\rbr{s, a}\in \Pcal$, referred to as the $d$-LP. 
The dual of this LP has solution $\qpi$, thus revealing the duality between the $Q$-function and the $d$-distribution of any policy $\pi$. 
We then discuss the mechanisms by which one can improve optimization stability through the application of regularization and redundant constraints. 
Although in general this may introduce bias into the final value estimate, there are a number of valid configurations for which the resulting estimator for $\rho(\pi)$ remains \emph{unbiased}.
We show that existing DICE algorithms cover several choices of these configurations, while there is also a sizable subset which remains unexplored.

\subsection{Linear Programming Representation for the $d^\pi$-distribution}\label{sec:dlp}
The following theorem presents a formulation of $\rho(\pi)$ in terms of a linear program with respect to the constraints in~\eqref{eq:bellman-d} and~\eqref{eq:bellman-q}.
\begin{theorem}\label{thm:dual-succinct}
Given a policy $\pi$, under~\asmpref{asmp:mdp_reg}, its value $\rho\rbr{\pi}$ defined in~\eqref{eqn:avgstep} can be expressed by the following $d$-LP:
\begin{equation}\label{eq:dual-succinct}
\max_{d:S\times A\rightarrow \RR}\,\, \EE_{d}\sbr{\rew\rbr{s, a}},\quad \st,\quad d(s,a) = \underbrace{(1-\gamma)\init(s)\pi(a|s) + \bellmant d(s,a)}_{\Bcal_*^\pi\cdot d}.
\end{equation}
  We refer to the $d$-LP above as the \textbf{dual} problem. Its corresponding \textbf{primal} LP is
\begin{equation}\label{eq:primal-succinct}
\min_{Q:S\times A\rightarrow \RR}\,\, \rbr{1 - \gamma}\EE_{\init\pi}\sbr{Q\rbr{s, a}},\quad \st,\quad   Q(s,a) = \underbrace{\rew(s,a) + \bellman Q(s,a)}_{\Bcal^\pi \cdot Q}.
\end{equation}
\end{theorem}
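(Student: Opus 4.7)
}
My plan is to handle the two LPs in turn: first verify directly that the $d$-LP's value equals $\rho(\pi)$ using the Bellman flow equation, then derive the primal $Q$-LP by taking the standard Lagrangian dual of the $d$-LP and invoking strong LP duality.

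\paragraph{Step 1: the dual LP equals $\rho(\pi)$.}
Under Assumption~\ref{asmp:mdp_reg}, equation~\eqref{eq:bellman-d} has a unique fixed-point solution, namely $d^\pi$. The constraint of the $d$-LP in~\eqref{eq:dual-succinct} is exactly this fixed-point equation, so the feasible set is the singleton $\{d^\pi\}$. Consequently the maximum is attained uniquely at $d^\pi$, and by the second identity in the displayed equation for $\rho(\pi)$ in Section~2.1 we get $\EE_{d^\pi}[R(s,a)] = \rho(\pi)$.

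\paragraph{Step 2: deriving the primal as the LP dual.}
I would form the Lagrangian of~\eqref{eq:dual-succinct} with a multiplier $Q:\S\times\A\to\RR$ for the equality constraints:
\begin{equation*}
L(d,Q) \;=\; \sum_{s,a} R(s,a)\,d(s,a) \;+\; \sum_{s,a} Q(s,a)\Bigl[(1-\gamma)\init(s)\pi(a|s) + \gamma(\bellmantnog d)(s,a) - d(s,a)\Bigr].
\end{equation*}
The key algebraic step is to use that $\bellmannog$ and $\bellmantnog$ are mutual adjoints (noted right after~\eqref{eq:bellman-d}), so $\sum_{s,a} Q(s,a)(\bellmantnog d)(s,a) = \sum_{s,a} d(s,a)(\bellmannog Q)(s,a)$. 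Rearranging gives
\begin{equation*}
L(d,Q) \;=\; \sum_{s,a} d(s,a)\bigl[R(s,a) + \gamma(\bellmannog Q)(s,a) - Q(s,a)\bigr] \;+\; (1-\gamma)\EE_{\initsampS}[Q(s,a)].
\end{equation*}
Since $d$ is unconstrained in sign and magnitude, $\sup_d L(d,Q)=+\infty$ unless the bracket vanishes pointwise, yielding the constraint $Q = R + \gamma\bellmannog Q = \realbellman Q$. What remains is $(1-\gamma)\EE_{\initsampS}[Q]$, producing exactly~\eqref{eq:primal-succinct}.

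\paragraph{Step 3: consistency of the primal value and strong duality.}
Under Assumption~\ref{asmp:mdp_reg}, the adjoint operator argument shows $I-\gamma\bellmannog$ is also invertible, so~\eqref{eq:bellman-q} has the unique fixed point $Q^\pi$. Thus the primal feasible set is the singleton $\{Q^\pi\}$, and its objective equals $(1-\gamma)\EE_{\initsampS}[Q^\pi(s,a)]$, which by the first identity for $\rho(\pi)$ in Section~2.1 again equals $\rho(\pi)$. Both LPs therefore have value $\rho(\pi)$, closing the duality gap and confirming the claim.

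\paragraph{Expected difficulty.}
Steps 1 and 3 are essentially bookkeeping against Assumption~\ref{asmp:mdp_reg} and the value identities already stated in the paper. The main conceptual step is the adjoint manipulation in Step~2; the only subtlety I would pay attention to is that the Lagrangian derivation gives the primal LP directly because both feasible sets are nonempty singletons, so strong duality holds trivially and I do not need to invoke any infinite-dimensional constraint-qualification machinery. If the paper's conventions require treating the continuous case more carefully (e.g., $\gamma=1$ deferred to the appendix), that is where I would be most cautious, but for the stated $\gamma\in[0,1)$ regime the argument is clean.
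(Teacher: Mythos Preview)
Your proposal is correct and follows essentially the same route as the paper: both argue that Assumption~\ref{asmp:mdp_reg} pins the $d$-LP feasible set to the singleton $\{d^\pi\}$, that the Bellman operator forces the $Q$-LP feasible set to the singleton $\{Q^\pi\}$, and that the two objective values coincide at $\rho(\pi)$, from which strong duality is immediate. The only difference is that you additionally carry out the explicit Lagrangian/adjoint computation in Step~2 to \emph{derive} the primal LP from the dual, whereas the paper's proof simply states the primal and verifies its value (the Lagrangian manipulation you perform appears in the paper only in the discussion following the theorem); this makes your argument slightly more self-contained but not methodologically different.
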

\begin{proof}
Notice that under~\asmpref{asmp:mdp_reg}, the constraint in~\eqref{eq:dual-succinct} determines a unique solution, which is the stationary distribution $d^\pi$. Therefore, the objective will be $\rho\rbr{\pi}$ by definition. On the other hand, due to the contraction of $\gamma\cdot \bellmannog$, the primal problem is feasible and the solution is $Q^\pi$, which shows the optimal objective value will also be $\rho\rbr{\pi}$, implying strong duality holds.
\end{proof}

\thmref{thm:dual-succinct} presents a succinct LP representation for policy value and reveals the duality between the $Q^\pi$-function and $d^\pi$-distribution, thus providing an answer to the question raised by~\citet{uehara2019minimax}. 
Although the $d$-LP provides a mechanism for policy evaluation, directly solving either the primal or dual $d$-LPs is difficult due to the number of constraints, which will present difficulties when the state and action spaces is uncountable. 
These issues are exaggerated in the off-policy setting where one only has access to samples $(s_0,s,a,r,s')$ from a stochastic process.
To overcome these difficulties, one can instead approach these primal and dual LPs through the Lagrangian, 
\begin{equation*}
\resizebox{0.98\textwidth}{!}
{
$
  \max_{d} \min_{\qvar} L(d, \qvar) \defeq (1-\gamma)\cdot\E_{\initsamp}[\qvar(s_0,a_0)] + \sum_{s,a}\dvar(s,a)\cdot(R(s,a) + \gamma\bellmannog\qvar(s,a) - \qvar(s,a)).
$
}
\end{equation*}
In order to enable the use of an arbitrary off-policy distribution $\visitrb$, we make the change of variables $\zeta(s,a)\defeq d(s,a)/\visitrb(s,a)$. This yields an equivalent Lagrangian in a more convenient form:
\begin{multline}  \label{eq:lagrangian-change-var}
  \max_{\zeta} \min_{\qvar} L_D(\zeta, \qvar) \defeq  (1-\gamma)\cdot\E_{\initsamp}[\qvar(s_0,a_0)]
 + \E_{\samptwo}[\zeta(s,a)\cdot(r + \gamma \qvar(s',a') - \qvar(s,a) )].
\end{multline}
The Lagrangian has primal and dual solutions $Q^* = Q^\pi$ and $\zeta^* = d^\pi / \visitrb$. 
Approximate solutions 
to one or both of 
$\hat{Q},\hat{\zeta}$ 
can be used to
estimate $\hat{\rho}(\pi)$, by either using the standard DICE paradigm in~\eqref{eq:avgstep-ratio} which corresponds to the dual objective in~\eqref{eq:dual-succinct} or, alternatively, by using the primal objective in~\eqref{eq:primal-succinct} or the Lagrangian objective in~\eqref{eq:lagrangian-change-var}; we further discuss these choices later in this section.
Although the Lagrangian in~\eqref{eq:lagrangian-change-var} should in principle be able to derive the solutions $\qpi, \visitpi$ and so yield accurate estimates of $\avgstep(\pi)$, in practice there are a number of optimization difficulties that are liable to be encountered.
Specifically, even in tabular case, due to lack of curvature, the Lagrangian is not strongly-convex-strongly-concave, and so one cannot guarantee the convergence of the final solution with stochastic gradient descent-ascent~(SGDA). 
These optimization issues can become more severe when moving to the continuous case with neural network parametrization, which is the dominant application case in practice.  
In order to mitigate these issues, we present a number of ways to introduce more stability into the optimization and discuss how these mechanisms may trade-off with the bias of the final estimate.
We will show that the application of certain mechanisms recovers the existing members of the DICE family, while a larger set remains unexplored.

\subsection{Regularizations and Redundant Constraints}\label{sec:reg_lagrangian}
The augmented Lagrangian method~(ALM)~\citep{rockafellar1974augmented} is proposed exactly for circumventing the optimization instability, where strong convexity is introduced by adding extra regularizations \emph{without} changing the optimal solution. However, directly applying ALM, \ie, adding $h_p\rbr{Q}\defeq \nbr{\Bcal^\pi \cdot\qvar - \qvar}^2_{d^\Dcal}$ or $h_d\rbr{d}\defeq D_f\rbr{d||\Bcal_*^\pi \cdot d}$ where $D_f$ denotes the $f$-divergence, will introduce extra difficulty, both statistically and algorithmically, due to the conditional expectation operator in $\Bcal^\pi$ and $\Bcal_*^\pi$ inside of the non-linear function in $h_p\rbr{Q}$ and $h_d\rbr{d}$, which is known as ``double sample'' in the RL literature~\citep{Baird95residualalgorithms}. Therefore, the vanilla stochastic gradient descent is no longer applicable~\citep{dai2016learning}, due to the bias in the gradient estimator. 

In this section, we use the spirit of ALM but explore other choices of regularizations to introduce strong convexity to the original Lagrangian~\eqref{eq:lagrangian-change-var}. 
In addition to regularizations, we also employ the use of redundant constraints, which serve to add more structure to the optimization without affecting the optimal solutions.
We will later analyze for which configurations these modifications of the original problem will lead to biased estimates for $\avgstep(\pi)$.

We first present the unified objective in full form equipped with all 
choices of 
regularizations and redundant 
constraints:
\begin{align}
\label{eq:lagrangian_reg}
\nonumber \max_{\zeta\positive} \min_{\qvar, \norm} L_D(\zeta, \qvar, \norm) \defeq  & (1-\gamma)\cdot\E_{\initsamp}[\qvar(s_0,a_0)] + \norm \\
\nonumber &+ \E_{\samptwo}[\zeta(s,a)\cdot(\reward\cdot R(s,a) + \gamma \qvar(s',a') - \qvar(s,a) - \norm)] \\
  &+ \preg\cdot\E_{\sampone}[f_1(\qvar(s,a))] - \dreg\cdot \E_{\sampone}[f_2(\zeta(s,a))].
\end{align}

Now, let us explain each term in $\rbr{\preg, \dreg, \reward, \zeta\positive, \norm}$.
\begin{itemize}
  \item {\bf {\color{magenta} Primal} and {\color{Orange}Dual} regularization:} To introduce better curvature into the Lagrangian, we introduce primal and dual regularization $\preg\EE_{\visitrb}\sbr{f_1\rbr{\qvar}}$ or $\dreg\EE_{\visitrb}\sbr{f_2\rbr{\zeta}}$, respectively. Here $f_1,f_2$ are some convex and lower-semicontinuous functions.
  \item {\bf\color{Plum} Reward:} Scaling the reward may be seen as an extension of the dual regularizer, as it is a component in the dual objective in~\eqref{eq:dual-succinct}. 
    We consider $\reward\in\{0, 1\}$.

  \item {\bf \color{Turquoise} Positivity:}
Recall that the solution to the original Lagrangian is
$\zeta^*\rbr{s, a} = \frac{d^\pi\rbr{s, a}}{\visitrb\rbr{s, a}}\ge 0$.
We thus consider adding a positivity constraint to the dual variable.
    This may be interpreted as modifying the original $d$-LP in~\eqref{eq:dual-succinct} to add a condition $d\ge0$ to its objective.
  \item {\bf \color{green} Normalization:} Similarly, the normalization constraint also comes from the property of the optimal solution $\zeta^*\rbr{s, a}$, \ie, $\EE_{\visitrb}\sbr{\zeta\rbr{s, a}} = 1$. If we add an extra constraint to the $d$-LP~\eqref{eq:dual-succinct} as $\sum_{s,a}\dvar(s,a) = 1$ and apply the Lagrangian, we result in the term $\norm - \EE_{\visitrb}\sbr{\norm \zeta\rbr{s, a}}$ seen in~\eqref{eq:lagrangian_reg}.   
\end{itemize}
As we can see, the latter two options come from the properties of optimal dual solution, and this suggests that their inclusion would not affect the optimal dual solution. 
On the other hand, the first two options (primal/dual regularization and reward scaling) will in general affect the solutions to the optimization. 
Whether a bias in the solution affects the final estimate depends on the estimator being used.

\paragraph{Remark (Robust optimization justification):} Besides the motivation from ALM for strong convexity, the regularization terms in~\eqref{eq:lagrangian_reg}, $\preg\cdot\E_{\sampone}[f_1(\qvar(s,a))]$ and $\dreg\cdot \E_{\sampone}[f_2(\zeta(s,a))]$, can also be interpreted as introducing robustness with some perturbations to the Bellman differences. We consider the dual regularization as an example. Particularly, the Fenchel dual form $\dreg\cdot \E_{\sampone}[f_2(\zeta(s,a))] = \dreg \cbr{\max_{\delta\rbr{s, a}\in \Omega}\,\, \inner{\zeta}{\delta} - \E_{\sampone}\sbr{f_2^*\rbr{\delta\rbr{s, a}}}}$ where $\Omega$ denotes the domain of function $f^*_2$. For simplicity, we consider $f_2^*\rbr{\cdot} = \rbr{\cdot}^2$. Plug this back into~\eqref{eq:lagrangian_reg}, we obtain 
\begin{align}
\label{eq:ro_lagrangian_dreg}
\nonumber \max_{\zeta\positive} \min_{\qvar, \norm, \delta\in \Omega} L_D(\zeta, \qvar, \norm) \defeq  & (1-\gamma)\cdot\E_{\initsamp}[\qvar(s_0,a_0)] + \norm \\
\nonumber &+ \E_{\samptwo}[\zeta(s,a)\cdot(\reward\cdot R(s,a) + \gamma \qvar(s',a') - \qvar(s,a) - \norm - \dreg\delta\rbr{s, a})] \\
  &+ \preg\cdot\E_{\sampone}[f_1(\qvar(s,a))] + \dreg\cdot \EE_{\sampone}[{\delta^2\rbr{s, a}}],
\end{align}
which can be understood as introducing slack variables or perturbations in $L_2$-ball to the Bellman difference $\reward\cdot R(s,a) + \gamma \qvar(s',a') - \qvar(s,a)$. For different regularization, the perturbations will be in different dual spaces. From this perspective, besides the stability consideration, the dual regularization will also mitigate both statistical error, due to sampling effect in approximating the Bellman difference, and approximation error induced by parametrization of $Q$. Similarly, the primal regularization can be interpreted as introducing slack variables to the stationary state-action distribution condition, please refer to~\appref{appendix:primal_robust}. 

Given estimates $\hat{Q},\hat{\lambda},\hat{\zeta}$, there are three potential ways to estimate $\avgstep(\pi)$.
\begin{itemize}
  \item {\bf Primal estimator:} $\hat{\rho}_Q(\pi) \defeq (1-\gamma)\cdot \E_{\initsamp}[\hat\qvar(s_0,a_0)] + \hat\lambda.$
  \item {\bf Dual estimator:} $ \hat{\rho}_\zeta(\pi) \defeq \E_{(s,a,r)\sim\visitrb}[\hat\zeta(s,a)\cdot r].$
  \item {\bf Lagrangian:} 
    $
      \hat{\rho}_{Q,\zeta}(\pi) \defeq \hat{\rho}_Q(\pi) + \hat{\rho}_\zeta(\pi) + \E_{\samptwo}\sbr{\hat\zeta\rbr{s, a}(\gamma\hat\qvar(s',a') - \hat\qvar(s,a) - \hat\lambda)}.
    $
\end{itemize}

The following theorem outlines when a choice of regularizations, redundant constraints, and final estimator will provably result in an unbiased estimate of policy value.
\begin{theorem}[Regularization profiling]\label{thm:reg_profile}
Under~\asmpref{asmp:mdp_reg} and~\ref{asmp:bounded_ratio}, we summarize the effects of $(\preg$, $\dreg$, $\reward$, $\zeta\positive$, $\norm)$, which corresponds to {\color{magenta} primal} and {\color{Orange} dual} regularizations, w/w.o. {\color{Plum} reward}, and {\color{Turquoise} positivity} and {\color{green} normalization} constraints.
without considering function approximation. 
\begin{table*}[h]
\renewcommand{\arraystretch}{1.2}
\setlength{\tabcolsep}{3pt}
\centering
\begin{small}
  \begin{tabular}{c|c|c||c|c|c}
    \toprule        
\multicolumn{3}{l||}{ Regularization (with or without $\norm$) } & $\hat\rho_Q$ & $\hat\rho_\zeta$ & $\hat\rho_{Q,\zeta}$ \\\hline\hline
& \multirow{2}{*}{ $\reward=1$ } & $\zeta$ free & \textbf{Unbiased} & \multirow{2}{*}{ Biased } & {\textbf{Unbiased}} \\\cline{3-4}\cline{6-6}
$\dreg$ = 0 & & $\zeta\positive$ & \multirow{7}{*}{Biased} & & {Biased} \\\cline{2-3}\cline{5-6}
$\preg$ > 0 & \multirow{2}{*}{ $\reward=0$ } & $\zeta$ free &  & \multirow{6}{*}{ \textbf{Unbiased} } & \multirow{6}{*}{\textbf{Unbiased}} \\\cline{3-3}
& & $\zeta\positive$ & & \\\cline{1-3}
& \multirow{2}{*}{ $\reward=1$ } & $\zeta$ free &  & \\\cline{3-3}
$\dreg$ > 0 & & $\zeta\positive$ & & \\\cline{2-3}
$\preg$ = 0 & \multirow{2}{*}{ $\reward=0$ } & $\zeta$ free & & \\\cline{3-3}
& & $\zeta\positive$ & & \\
\bottomrule
\end{tabular}
\end{small}
\end{table*}
\end{theorem}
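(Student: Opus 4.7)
The plan is a case-by-case KKT analysis of the regularized Lagrangian~\eqref{eq:lagrangian_reg}: for each of the eight configurations I derive the optimal $(Q^*,\zeta^*,\norm^*)$ and substitute into each of $\hat\rho_Q,\hat\rho_\zeta,\hat\rho_{Q,\zeta}$ to check whether it recovers $\rho(\pi)$. Writing $d^*(s,a)\defeq \visitrb(s,a)\zeta^*(s,a)$, the two governing first-order conditions are
\[
(I - \bellmant)\, d^*(s,a) = (1-\gamma)\init(s)\pi(a|s) + \preg\,\visitrb(s,a)\,f_1'(Q^*(s,a))
\]
from stationarity in $Q$, together with $\reward R + \gamma\bellmannog Q^* - Q^* - \norm - \dreg f_2'(\zeta^*) \le 0$ from stationarity in $\zeta$ (equality on $\{\zeta^*>0\}$, everywhere if $\zeta$ is unconstrained), supplemented by $\E_{\visitrb}[\zeta^*]=1$ whenever $\norm$ is present.

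The bottom four rows ($\preg=0,\dreg>0$) are immediate: the $Q$-stationarity above reduces to the Bellman equation for $d^*$, so by~\asmpref{asmp:mdp_reg} we obtain $d^*=d^\pi$ and $\hat\rho_\zeta=\E_{d^\pi}[R]=\rho(\pi)$ in all four sub-cases. Substituting $\gamma Q^*(s',a') - Q^*(s,a) - \norm = \dreg f_2'(\zeta^*) - \reward R$ (via the $\zeta$-KKT with complementary slackness) into $\hat\rho_{Q,\zeta}$ and unrolling the corresponding shifted Bellman equation for $Q^*$ along $\pi$-trajectories yields $\hat\rho_Q = \reward\rho(\pi) - \dreg\,\E_{\visitrb}[\zeta^* f_2'(\zeta^*)]$ and $\hat\rho_{Q,\zeta} = \hat\rho_Q + (1-\reward)\rho(\pi) + \dreg\,\E_{\visitrb}[\zeta^* f_2'(\zeta^*)] = \rho(\pi)$. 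So the Lagrangian is unbiased while $\hat\rho_Q$ inherits a nonvanishing $\dreg$-correction, establishing these rows.

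For $\preg>0,\dreg=0,\reward=1,\zeta$ free, the $\zeta$-FOC pins $Q^*=Q^\pi-\norm^*/(1-\gamma)$, giving $\hat\rho_Q=(1-\gamma)\E_{\init\pi}[Q^\pi] - \norm^* + \norm^* = \rho(\pi)$; the Lagrangian collapses to $\hat\rho_Q$ since the Bellman residual multiplying $\zeta^*$ vanishes, but $\hat\rho_\zeta$ is biased because the $Q$-stationarity carries the nonvanishing term $\preg\visitrb f_1'(Q^\pi - \norm^*/(1-\gamma))$ (nonzero since $Q^\pi$ is non-constant). For $\reward=0$, the $\zeta$-FOC instead forces $Q^*$ to be the constant $-\norm^*/(1-\gamma)$, so $f_1'(Q^*)$ reduces to a single scalar $c$; summing the $Q$-stationarity and using $\sum d^\pi = \sum\visitrb = 1$ gives $\sum d^* = 1 + \preg c/(1-\gamma)$, and the normalization constraint then forces $c=0$. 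Strict convexity of $f_1$ uniquely selects $\norm^*$ and restores $d^*=d^\pi$, so $\hat\rho_\zeta$ is unbiased, $\hat\rho_Q=0$ is biased, and the Lagrangian is again unbiased by the analogous cancellation.

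The main obstacle is the $\preg>0,\dreg=0,\reward=1,\zeta\positive$ row. Complementary slackness turns the $\zeta$-FOC into an inequality off $\{\zeta^*>0\}$, so $Q^*$ is no longer pinned to $Q^\pi-\norm^*/(1-\gamma)$ uniformly: the perturbation $(I-\bellmant)^{-1}(\preg\visitrb f_1'(Q^*))$ can push $d^*$ below zero on some region, activating the positivity constraint and shrinking the support on which the Bellman equation for $Q^*$ holds. Since $\init\pi$ can place mass on the off-support region, $\hat\rho_Q$ deviates from $\rho(\pi)$, $\hat\rho_\zeta$ is biased because $d^*\neq d^\pi$, and the Lagrangian reduces to $\hat\rho_Q$ by complementary slackness and is therefore also biased; a rigorous argument requires either an active-set lemma or a small explicit counterexample MDP exhibiting a nonzero perturbation. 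The corresponding $\reward=0,\zeta\positive$ row instead reduces to the constant-$Q^*$ analysis above under the support condition implicit in~\asmpref{asmp:bounded_ratio}, yielding the same Biased/Unbiased/Unbiased pattern as its $\zeta$-free counterpart.
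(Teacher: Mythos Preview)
Your approach---direct KKT analysis of the regularized Lagrangian---is close in spirit to the paper's, which instead eliminates one variable via Fenchel conjugacy and then performs a change of variables (e.g., $\nu = \Bcal^\pi Q - Q$ or $\nu$ such that $\visitrb\cdot\zeta = (\Ical-\bellmant)^{-1}((1-\gamma)\mu_0\pi + \visitrb\cdot\nu)$) to reduce each case to a single convex optimization whose optimizer is read off in closed form. On the bottom four rows and on row~(i) your computations agree with the paper's, and your derivation of $\hat\rho_Q=\reward\rho(\pi)-\dreg\,\E_{\visitrb}[\zeta^* f_2'(\zeta^*)]$ and the resulting cancellation in $\hat\rho_{Q,\zeta}$ is correct and arguably cleaner than the paper's route through the $f$-divergence identity.

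There are two genuine gaps relative to the paper. First, for rows (iii)--(iv) ($\preg>0,\dreg=0,\reward=0$) your argument that $c=f_1'(Q^*)=0$ hinges on the normalization constraint: you sum the $Q$-stationarity to get $\sum d^*=1+\preg c/(1-\gamma)$ and then invoke $\E_{\visitrb}[\zeta^*]=1$. But the theorem asserts the same bias pattern \emph{with or without} $\norm$, and without $\norm$ nothing in your argument forces $c=0$. The paper handles the no-$\norm$ sub-case by a side condition you have not stated---that $f_1^*$ attains its minimum at $0$, equivalently $f_1'(0)=0$---under which the unconstrained $\zeta$-FOC gives $Q^*=0$ directly and $c=0$ follows. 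You need to either adopt this assumption explicitly for the no-$\norm$ branch, or note that the claimed unbiasedness of $\hat\rho_\zeta$ actually fails for regularizers like $f_1(x)=\tfrac12(x-1)^2$ in the absence of $\norm$.

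Second, for row~(ii) ($\preg>0,\dreg=0,\reward=1,\zeta\positive$) you leave the argument at ``an active-set lemma or a small explicit counterexample''. The paper does not stop there: it enforces nonnegativity via the reparametrization $\exp(\nu)=(\Ical-\bellmant)(\visitrb\cdot\zeta)/\visitrb$, solves the resulting first-order condition to obtain the closed forms
\[
\zeta^*=\tfrac{1}{\visitrb}(\Ical-\bellmant)^{-1}\visitrb\Bigl(\preg f_1'(Q^\pi)+\tfrac{(1-\gamma)\mu_0\pi}{\visitrb}\Bigr)_+,\quad
Q^*=f_1^{*\prime}\!\Bigl(\tfrac{1}{\preg}\bigl((\cdot)_+-\tfrac{(1-\gamma)\mu_0\pi}{\visitrb}\bigr)\Bigr),
\]
and then reads off that all three estimators are biased whenever the $(\cdot)_+$ truncation is active. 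Your KKT route can reach the same place, but you must actually exhibit a solution with active positivity constraint (or a concrete MDP), not just describe the mechanism.
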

Notice that the primal and dual solutions can both be unbiased under
specific regularization configurations, 
but the dual solutions are unbiased in 6 out of 8 such configurations,
whereas the primal solution is unbiased in only 1 configuration.
The primal solution additionally requires the positivity constraint to be
excluded (see details in~\appref{appendix:reg_prof}), further restricting
its optimization choices.

The Lagrangian estimator is unbiased when at least one of $\hat Q, \hat\lambda$ or $\hat\zeta$ are unbiased.
This property is referred to as \emph{doubly robust} in the literature~\citep{jiang2015doubly}
This seems to imply that the Lagrangian estimator is optimal for behavior-agnostic off-policy evaluation.
However, this is not the case as we will see in the empirical analysis. 
Instead, the approximate dual solutions are typically more accurate than
approximate primal solutions. Since neither is exact, the Lagrangian suffers from error in both, while the dual estimator $\hat{\rho}_\zeta$ will exhibit more robust performance, as it solely relies on the approximate $\hat\zeta$.

\subsection{Recovering Existing OPE Estimators}\label{sec:dice_lagrangian}
This organization provides
a complete picture of the DICE family of estimators.
Existing DICE estimators can simply be recovered
by picking one of the valid regularization configurations:
\begin{itemize}
\item{\textbf{DualDICE}~\cite{NacChoDaiLi19}: $\rbr{\preg=0, \dreg=1, \reward=0}$ without $\zeta\positive$ and without $\norm$. DualDICE also derives an \emph{unconstrained primal form} where optimization is exclusively over the primal variables (see~\appref{appendix:alter_primal}). This form results in a~\emph{biased} estimate but avoids difficults in minimax optimization, which again is a tradeoff between optimization stability and solution unbiasedness.}

\item{\textbf{GenDICE}~\cite{zhang2020gendice} and \textbf{GradientDICE}~\cite{zhang2020gradientdice}: $\rbr{\preg=1, \dreg=0, \reward=0}$ with $\norm$. GenDICE differs from GradientDICE in that GenDICE enables $\zeta\positive$ whereas GradientDICE disables it.

\item{\textbf{DR-MWQL} and \textbf{MWL}}~\cite{uehara2019minimax}:$(\preg = 0, \dreg = 0, \reward = 1)$ and $(\preg=0, \dreg=0, \reward = 0)$, both without $\zeta\positive$ and without $\norm$.}

\item{\textbf{LSTDQ}~\citep{lagoudakis2003least}}: With linear parametrization for $\tau(s, a)=\alpha^\top \phi\rbr{s, a}$ and $\qvar\rbr{s, a} = \beta^\top \phi\rbr{s, a}$, for any \emph{unbiased} estimator without $\xi\positive$ and $\norm$ in~\thmref{thm:reg_profile}, we can recover LSTDQ. Please refer to~\appref{appendix:recover_ope} for details.

\item{\textbf{Algae $Q$-LP}~\citep{algae}: $\rbr{\preg=0, \dreg=1, \reward = 1, \zeta\positive}$ without $\zeta\positive$ and without $\norm$. }

\item{\textbf{BestDICE:} $(\preg = 0, \dreg = 1,\reward = 0/1)$ with $\zeta\positive$ and with $\norm$.
More importantly, we discover a variant that achieves the best performance, which was not identified without this unified framework.}
\end{itemize}

\section{Experiments}\label{sec:experiment}
In this section, we empirically verify the theoretical findings.
We evaluate different choices of estimators, regularizers, and constraints,
on a set of OPE tasks ranging from tabular (Grid) to discrete-control
(Cartpole) and continuous-control (Reacher),
under linear and neural network parametrizations,
with offline data collected from behavior policies with different noise levels
($\pi_1$ and $\pi_2$).
See \appref{appendix:exp} for implementation details and additional results.
Our empirical conclusions are as follows:
\begin{itemize}
    \item The dual estimator $\hat\rho_\zeta$ is unbiased under more configurations and yields best performance out of all estimators, and furthermore exhibits strong robustness to scaling and shifting of MDP rewards.
    \item Dual regularization ($\alpha_\zeta>0$) yields better estimates than primal regularization; the choice of $\alpha_R\in\{0,1\}$ exhibits a slight advantage to $\alpha_R=1$.
    \item The inclusion of redundant constraints ($\lambda$ and $\zeta\ge0$) improves
stability and estimation performance.
    \item As expected, optimization using the unconstrained primal form is more stable but also more biased than optimization using the minimax regularized Lagrangian.
\end{itemize}
Based on these findings, we propose a particular set of choices that
generally performs well,
overlooked by previously proposed DICE estimators:
the dual estimator $\hat\rho_\zeta$ with regularized dual variable
($\alpha_\zeta>0,\alpha_R=1$) and redundant constraints
($\lambda, \zeta\ge0$) optimized with the Lagrangian.

\subsection{Choice of Estimator ($\hat\rho_Q$, $\hat\rho_\zeta$, or $\hat\rho_{Q,\zeta}$)}
\label{sec:exp_est}
We first consider the choice of estimator. In each case, we perform Lagrangian optimization with regularization chosen according to~\thmref{thm:reg_profile} to not bias the resulting estimator. We also use $\alpha_R=1$ and include redundant constraints for $\lambda$ and $\zeta\ge0$ in the dual estimator. Although not shown, we also evaluated combinations of regularizations which can bias the estimator (as well as no regularizations) and found that these generally performed worse; see~\secref{sec:exp_reg} for a subset of these experiments.

Our evaluation of different estimators is presented in~\figref{fig:est_reg}.
We find that the dual estimator consistently produces the best estimates across different tasks and behavior policies. In comparison, the primal estimates are significantly worse. While the Lagrangian estimator can improve on the primal, it generally exhibits higher variance than the dual estimator. Presumably, the Lagrangian does not benefit from the doubly robust property, since both solutions are biased in this practical setting.

\begin{figure}[t]
\centering
  \includegraphics[width=1.\linewidth]{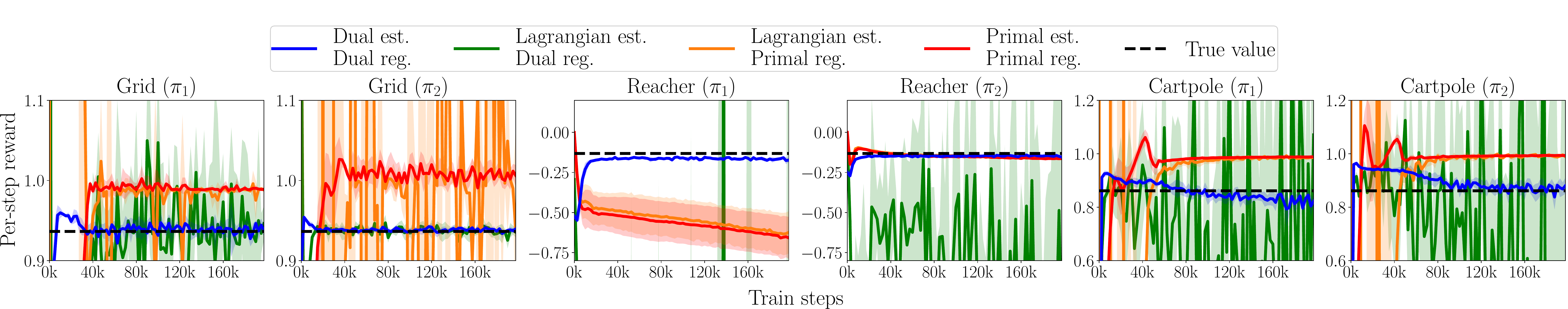}
  \caption{Estimation results on Grid, Reacher, and Cartpole using data collected from different behavior policies ($\pi_2$ is closer to the target policy than $\pi_1$). Biased estimator-regularizer combinations from~\thmref{thm:reg_profile} are omitted. The dual estimator with regularized dual variable outperforms all other estimators/regularizers. Lagrangian can be as good as the dual but has a larger variance.}
  \label{fig:est_reg}
\end{figure}
\begin{figure}[t]
\centering
  \begin{subfigure}{1.\columnwidth}
    \includegraphics[width=1.\linewidth]{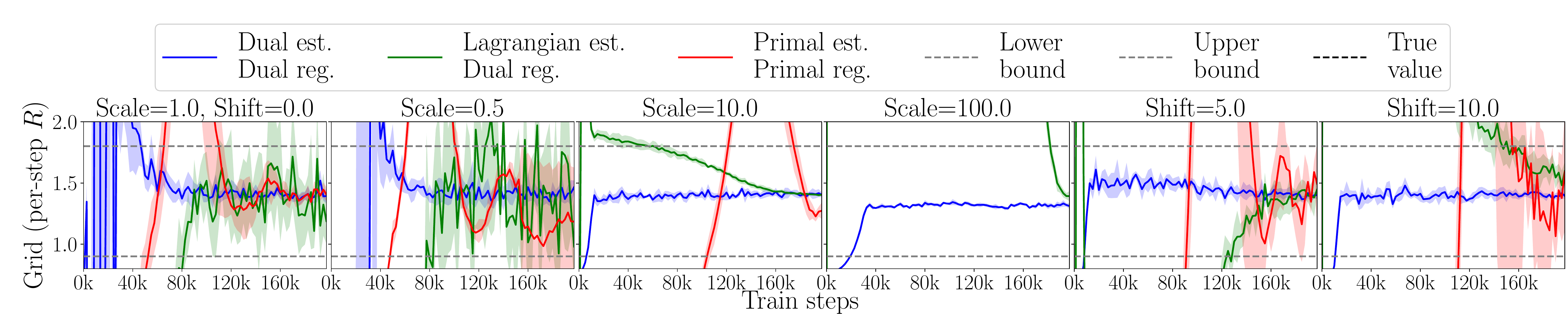}
  \end{subfigure}
  \begin{subfigure}{1.\columnwidth}
    \includegraphics[width=1.\linewidth]{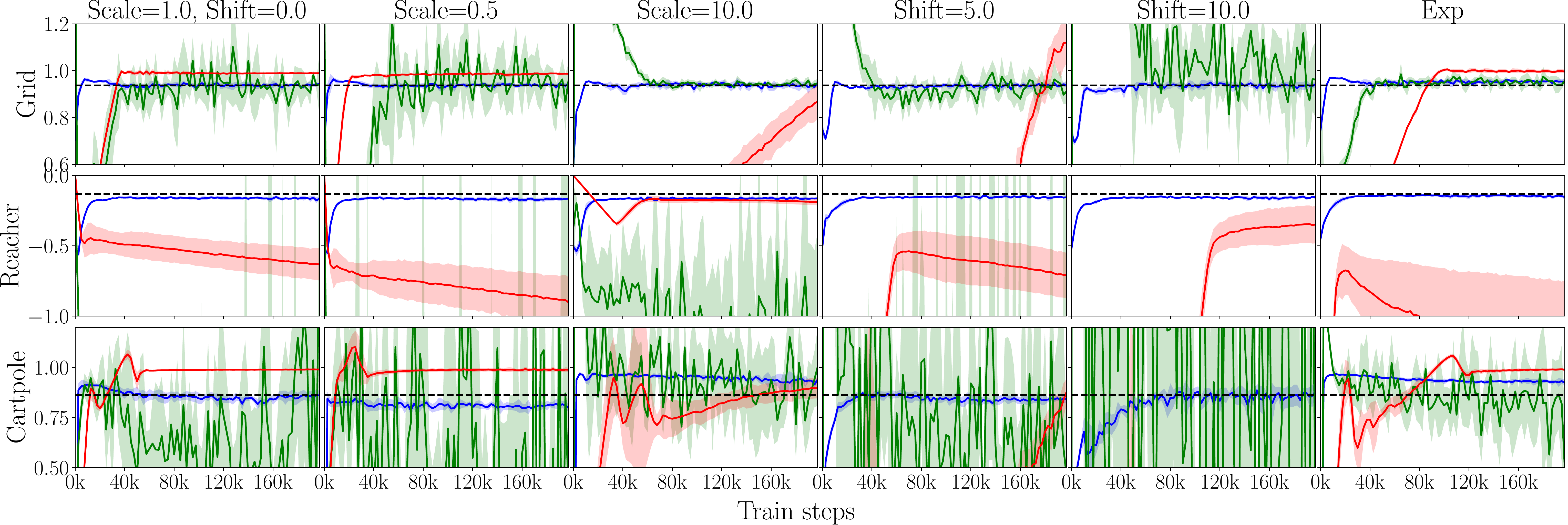}
  \end{subfigure}
  \caption{Primal (red), dual (blue), and Lagrangian (green) estimates under linear (top) and neural network (bottom) parametrization when rewards are transformed during training. Estimations are transformed back and plotted on the original scale. The dual estimates are robust to all transformations, whereas the primal and Lagrangian estimates are sensitive to the reward values.}
\label{fig:est_robust}  
\end{figure}

To more extensively evaluate the dual estimator,
we investigate its performance when the reward function
is scaled by a constant, shifted by a constant, or exponentiated.
\footnote{Note this is separate from $\alpha_R$, which only affects
optimization. We use $\alpha_R=1$ exclusively here.}
To control for difficulties in optimization, we first parametrize the primal and dual variables as linear functions, and use stochastic gradient descent to solve the convex-concave minimax objective in~\eqref{eq:lagrangian_reg} with $\alpha_Q = 0$, $\alpha_\zeta = 1$, and $\alpha_R = 1$. 
Since a linear parametrization changes the ground truth of evaluation,
we compute the upper and lower estimation bounds
by only parameterizing the primal or the dual variable
as a linear function.~\figref{fig:est_robust} (top) shows the estimated per-step reward of the Grid task.
When the original reward is used (col. 1), the primal, dual, and Lagrangian estimates eventually converge to roughly the same value (even though primal estimates converge much slower). When the reward is scaled by 10 or 100 times or shifted by 5 or 10 units (the original reward is between 0 and 1), the resulting primal estimates are severely affected and do not converge given the same number of gradient updates. 
When performing this same evaluation with neural network parametrization (\figref{fig:est_robust}, bottom), the primal estimates continue to exhibit sensitivity to reward transformations, whereas the dual estimates stay roughly the same after being transformed back to the original scale. We further implemented target network for training stability of the primal variable, and the same concolusion holds (see Appendix). Note that while the dual solution is robust to the scale and range of rewards, the optimization objective used here still has $\alpha_R = 1$, which is different from $\alpha_R = 0$ where $\hat\rho_Q$ is no longer a valid estimator.

\subsection{Choice of Regularization ($\alpha_\zeta$, $\alpha_R$, and $\alpha_Q$)}
\label{sec:exp_reg}
Next, we study the choice between regularizing the primal or dual variables.
Given the results of~\secref{sec:exp_est}, we focus on ablations using the dual estimator $\hat\rho_\zeta$  to estimate $\rho_\pi$.
Results are presented in~\figref{fig:reg}. 
As expected, we see that regularizing the primal variables when $\alpha_R = 1$ leads to a biased estimate, especially in Grid ($\pi_1$), Reacher ($\pi_2$), and Cartpole.
Regularizing the dual variable (blue lines) on the other hand does not incur
such a bias.
Additionally, the value of $\alpha_R$ has little effect on the final estimates
when the dual variable is regularized (dotted versus solid blue lines).
While the invariance to $\alpha_R$ may not generalize to other tasks,
an advantage of the dual estimates with regularized dual variable
is the flexibility to set $\alpha_R=0$ or $1$ depending on the 
reward function.

\begin{figure}[t]
\centering
\includegraphics[width=1.\linewidth]{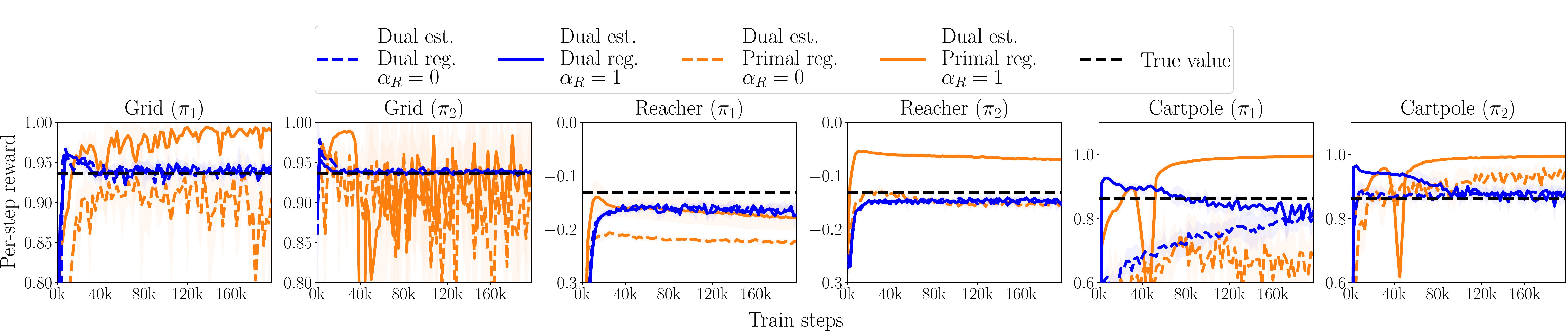}
\caption{Dual estimates when $\alpha_R = 0$ (dotted line) and $\alpha_R = 1$ (solid line). Regularizing the dual variable (blue) is consistently better than regularizing the primal variable (orange). $\alpha_R \ne 0$ and $\alpha_Q \ne 0$ leads to biased estimation (solid orange). The value of $\alpha_R$ does not affect the final estimate when $\alpha_\zeta = 1, \alpha_Q = 0$.}
\label{fig:reg}
\end{figure}
\begin{figure}[h!]
  \begin{subfigure}{1.\columnwidth}
    \includegraphics[width=1.\linewidth]{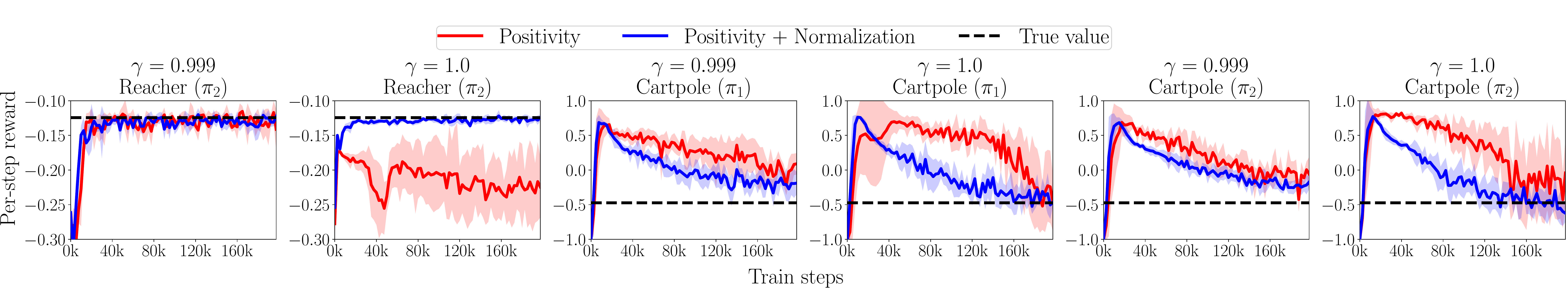}    
  \end{subfigure}
  \begin{subfigure}{1.\columnwidth}
    \includegraphics[width=1.\linewidth]{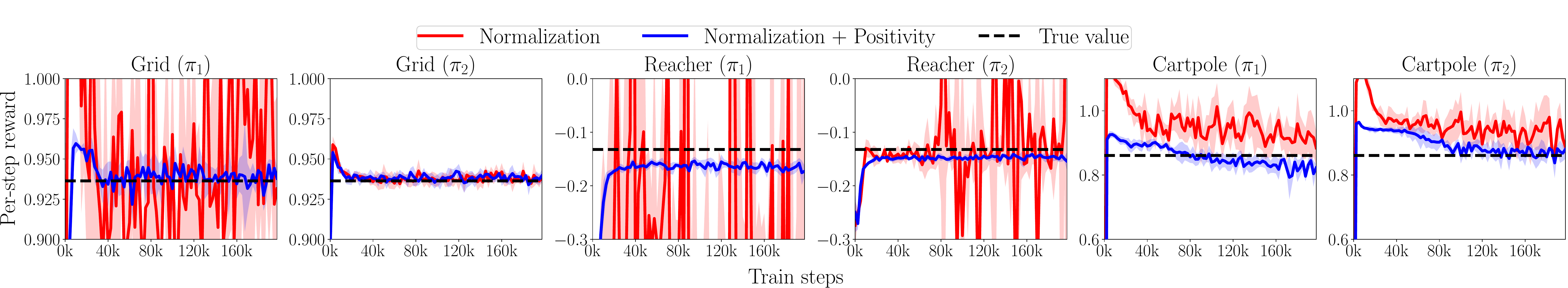}
  \end{subfigure}
  \begin{subfigure}{1.\columnwidth}
    \includegraphics[width=1.\linewidth]{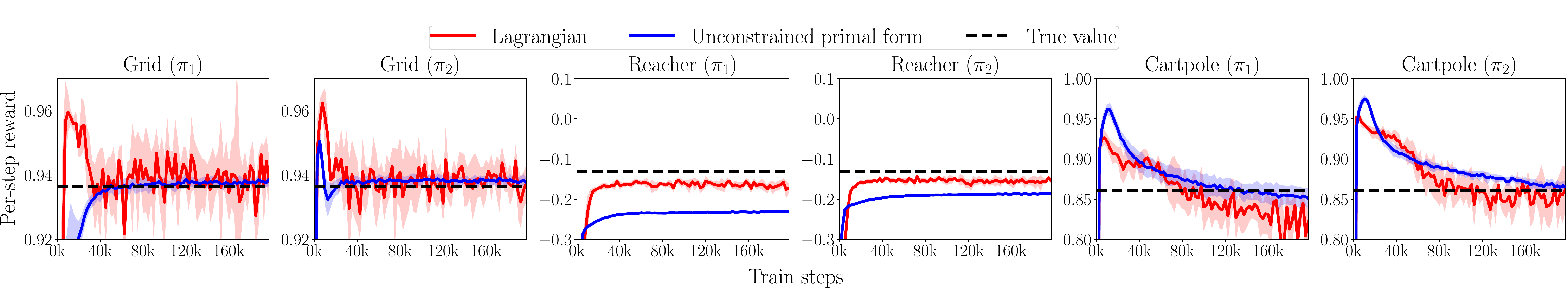}
  \end{subfigure}
  \caption{Apply positive constraint, normalization constraint, and the unconstrained primal form during optimization (blue curves). Positivity constraint (row 1) improves training stability. Normalization constraint is essential when $\gamma = 1$, and also helps when $\gamma < 1$ (row 2). Solving the unconstrained primal problem (row 3) can be useful when the action space is discrete.}
\label{fig:opt}  
\end{figure}

\subsection{Choice of Redundant Constraints ($\lambda$ and $\zeta\ge0$)}
\label{sec:exp_opt}

So far our experiments with the dual estimator used $\lambda$ and $\zeta\ge0$ in the optimizations,
corresponding
to the normalization and positive constraints in the $d$-LP.
However, these are in principle not necessary when $\gamma<1$, and so we evaluate the effect of removing them. 
Given the results of the previous sections, we focus our ablations on the use of the dual estimator $\hat\rho_\zeta$ with dual regularization $\alpha_\zeta>0,\alpha_R=1$.

\textbf{Normalization.}
We consider the effect of removing the normalization constraint ($\lambda$).
~\figref{fig:opt} (row 1) shows the effect of keeping (blue curve) or removing
(red curve) this
constraint during training. 
We see that training becomes less stable and approximation error increases, even when $\gamma<1$.

\textbf{Positivity.} 
We continue to evaluate the effect of removing the positivity constraint $\zeta\ge0$, which, in our previous experiments, was enforced via applying a square function to the dual variable neural network output.
Results are presented in~\figref{fig:opt} (row 2), where we again see that the removal of this constraint is detrimental to optimization stability and estimator accuracy.

\subsection{Choice of Optimization (Lagrangian or Unconstrained Primal Form)}
So far, our experiments have used minimax optimization via the Lagrangian to learn primal and dual variables.
We now consider solving the unconstrained primal form of the $d$-LP, which~\secref{sec:reg_lagrangian} suggests
may lead to an easier, but biased, optimization.~\figref{fig:opt} (row 3)
indeed shows that the unconstrained primal reduces variance on Grid and produces better estimates on Cartpole.
Both environments have discrete action spaces.
Reacher, on the other hand, has a continuous action space,
which creates difficulty when taking the expectation over next step samples, causing bias in
the unconstrained primal form. Given this mixed performance, we generally advocate for
the Lagrangian, unless the task is discrete-action  and the stochasticity of the dynamics is  known to be low.

\section{Related work}\label{sec:related}
Off-policy evaluation has long been studied
in the RL literature~\citep{farajtabar2018more,jiang2015doubly,kallus2019double,munos2016safe,Precup00ET,Thomas15HCPE}.
While some approaches are model-based~\citep{fonteneau13batch}, or work by estimating the value function~\citep{duan2020minimaxoptimal},
most rely on importance reweighting to transform the off-policy
data distribution to the on-policy target distribution.
They often require to know or estimate the behavior policy,
and suffer a variance exponential in the horizon,
both of which limit their applications.
Recently, a series of works were proposed to address these
challenges~\cite{kallus2019efficiently,liu2018breaking,tang20doubly}.  Among them is the DICE family~\citep{NacChoDaiLi19,zhang2020gendice,zhang2020gradientdice},
which performs some form of stationary distribution estimation.
The present paper develops a convex duality framework that unifies
many of these algorithms, and offers further important insights.
Many OPE algorithms may be understood to correspond to the categories considered here.  Naturally, the recent stationary distribution correction algorithms~\citep{NacChoDaiLi19,zhang2020gendice,zhang2020gradientdice}, are the dual methods.
The FQI-style estimator~\citep{duan2020minimaxoptimal} loosely corresponds to our primal estimator. 
Moreover, Lagrangian-type estimators are also considered~\citep{tang20doubly,uehara2019minimax}, although some are not for the behavior-agnostic setting~\citep{tang20doubly}.

Convex duality has been widely used in machine learning, and in RL in particular.
In one line of literature, it was used to solve the Bellman equation, whose fixed point is the value
function~\citep{dai18sbeed,du2017stochastic,LiuLiuGhaMahetal15}. 
Here, duality facilitates derivation of an objective function
that can be conveniently approximated by sample averages, so that solving
for the fixed point is converted to that of finding a saddle point.
Another line of work, more similar to the present paper, is to optimize
the Lagrangian of the linear program that characterizes the value
function~\citep{basserrano2019faster,chen2018scalable,wang2017randomized}.
In contrast to our work, these algorithms typically do not incorporate
off-policy correction, but assume the availability of on-policy samples.

\section{Conclusion}\label{sec:conclusion}
We have proposed a unified view of off-policy evaluation via the regularized Lagrangian of the $d$-LP. Under this unification, existing DICE algorithms are recovered by specific (suboptimal) choices of regularizers, (redundant) constraints, and ways to convert optimized solutions to policy values. By systematically studying the mathematical properties and empirical effects of these choices, we have found that the dual estimates (i.e., policy value in terms of the state-action distribution) offer greater flexibility in incorporating optimization stablizers while preserving asymptotic unibasedness, in comparison to the primal estimates (i.e., estimated $Q$-values). Our study also reveals alternative estimators not previously identified in the literature that exhibit improved performance. Overall, these findings suggest promising new directions of focus for OPE research in the offline setting.

\subsubsection*{Acknowledgments}
We thank Hanjun Dai and other members of the Google Brain team for helpful discussions.

\bibliography{main}

\begin{thebibliography}{31}
\providecommand{\natexlab}[1]{#1}
\providecommand{\url}[1]{\texttt{#1}}
\expandafter\ifx\csname urlstyle\endcsname\relax
  \providecommand{\doi}[1]{doi: #1}\else
  \providecommand{\doi}{doi: \begingroup \urlstyle{rm}\Url}\fi

\bibitem[Baird(1995)]{Baird95residualalgorithms}
Leemon Baird.
\newblock Residual algorithms: Reinforcement learning with function
  approximation.
\newblock In \emph{In Proceedings of the Twelfth International Conference on
  Machine Learning}, pages 30--37. Morgan Kaufmann, 1995.

\bibitem[Bas-Serrano and Neu(2019)]{basserrano2019faster}
Joan Bas-Serrano and Gergely Neu.
\newblock Faster saddle-point optimization for solving large-scale {Markov}
  decision processes, 2019.
\newblock arXiv:1909.10904.

\bibitem[Brockman et~al.(2016)Brockman, Cheung, Pettersson, Schneider,
  Schulman, Tang, and Zaremba]{brockman2016openai}
Greg Brockman, Vicki Cheung, Ludwig Pettersson, Jonas Schneider, John Schulman,
  Jie Tang, and Wojciech Zaremba.
\newblock {OpenAI Gym}.
\newblock \emph{arXiv preprint arXiv:1606.01540}, 2016.

\bibitem[Chen et~al.(2018)Chen, Li, and Wang]{chen2018scalable}
Yichen Chen, Lihong Li, and Mengdi Wang.
\newblock Scalable bilinear $\pi$ learning using state and action features.
\newblock \emph{arXiv preprint arXiv:1804.10328}, 2018.

\bibitem[Dai et~al.(2016)Dai, He, Pan, Boots, and Song]{dai2016learning}
Bo~Dai, Niao He, Yunpeng Pan, Byron Boots, and Le~Song.
\newblock Learning from conditional distributions via dual embeddings.
\newblock \emph{arXiv preprint arXiv:1607.04579}, 2016.

\bibitem[Dai et~al.(2018)Dai, Shaw, Li, Xiao, He, Liu, Chen, and
  Song]{dai18sbeed}
Bo~Dai, Albert Shaw, Lihong Li, Lin Xiao, Niao He, Zhen Liu, Jianshu Chen, and
  Le~Song.
\newblock {SBEED}: Convergent reinforcement learning with nonlinear function
  approximation.
\newblock In \emph{Proceedings of the 35th International Conference on Machine
  Learning}, pages 1133--1142, 2018.

\bibitem[Du et~al.(2017)Du, Chen, Li, Xiao, and Zhou]{du2017stochastic}
Simon~S Du, Jianshu Chen, Lihong Li, Lin Xiao, and Dengyong Zhou.
\newblock Stochastic variance reduction methods for policy evaluation.
\newblock In \emph{Proceedings of the 34th International Conference on Machine
  Learning-Volume 70}, pages 1049--1058. JMLR. org, 2017.

\bibitem[Duan and Wang(2020)]{duan2020minimaxoptimal}
Yaqi Duan and Mengdi Wang.
\newblock Minimax-optimal off-policy evaluation with linear function
  approximation, 2020.
\newblock arXiv:2002.09516.

\bibitem[Farajtabar et~al.(2018)Farajtabar, Chow, and
  Ghavamzadeh]{farajtabar2018more}
Mehrdad Farajtabar, Yinlam Chow, and Mohammad Ghavamzadeh.
\newblock More robust doubly robust off-policy evaluation.
\newblock \emph{arXiv preprint arXiv:1802.03493}, 2018.

\bibitem[Fonteneau et~al.(2013)Fonteneau, Murphy, Wehenkel, and
  Ernst]{fonteneau13batch}
Raphael Fonteneau, Susan~A. Murphy, Louis Wehenkel, and Damien Ernst.
\newblock Batch mode reinforcement learning based on the synthesis of
  artificial trajectories.
\newblock \emph{Annals of Operations Research}, 208\penalty0 (1):\penalty0
  383--416, 2013.

\bibitem[Hasselt et~al.(2016)Hasselt, Guez, and Silver]{van2016deep}
H.~Van Hasselt, A.~Guez, and D.~Silver.
\newblock Deep reinforcement learning with double {Q}-learning.
\newblock In \emph{AAAI}, volume~16, pages 2094--2100, 2016.

\bibitem[Jiang and Li(2015)]{jiang2015doubly}
Nan Jiang and Lihong Li.
\newblock Doubly robust off-policy value evaluation for reinforcement learning.
\newblock \emph{arXiv preprint arXiv:1511.03722}, 2015.

\bibitem[Kallus and Uehara(2019{\natexlab{a}})]{kallus2019double}
Nathan Kallus and Masatoshi Uehara.
\newblock Double reinforcement learning for efficient off-policy evaluation in
  {Markov} decision processes.
\newblock \emph{arXiv preprint arXiv:1908.08526}, 2019{\natexlab{a}}.

\bibitem[Kallus and Uehara(2019{\natexlab{b}})]{kallus2019efficiently}
Nathan Kallus and Masatoshi Uehara.
\newblock Efficiently breaking the curse of horizon: Double reinforcement
  learning in infinite-horizon processes.
\newblock \emph{arXiv preprint arXiv:1909.05850}, 2019{\natexlab{b}}.

\bibitem[Lagoudakis and Parr(2003)]{lagoudakis2003least}
Michail~G Lagoudakis and Ronald Parr.
\newblock Least-squares policy iteration.
\newblock \emph{Journal of machine learning research}, 4\penalty0
  (Dec):\penalty0 1107--1149, 2003.

\bibitem[Liu et~al.(2015)Liu, Liu, Ghavamzadeh, Mahadevan, and
  Petrik]{LiuLiuGhaMahetal15}
Bo~Liu, Ji~Liu, Mohammad Ghavamzadeh, Sridhar Mahadevan, and Marek Petrik.
\newblock Finite-sample analysis of proximal gradient {TD} algorithms.
\newblock In \emph{Proc. The 31st Conf. Uncertainty in Artificial Intelligence,
  Amsterdam, Netherlands}, 2015.

\bibitem[Liu et~al.(2018)Liu, Li, Tang, and Zhou]{liu2018breaking}
Qiang Liu, Lihong Li, Ziyang Tang, and Dengyong Zhou.
\newblock Breaking the curse of horizon: Infinite-horizon off-policy
  estimation.
\newblock In \emph{Advances in Neural Information Processing Systems}, pages
  5356--5366, 2018.

\bibitem[Meyn and Tweedie(2012)]{meyn2012markov}
Sean~P Meyn and Richard~L Tweedie.
\newblock \emph{Markov Chains and Stochastic Stability}.
\newblock Springer Science \& Business Media, 2012.

\bibitem[Munos et~al.(2016)Munos, Stepleton, Harutyunyan, and
  Bellemare]{munos2016safe}
R.~Munos, T.~Stepleton, A.~Harutyunyan, and M.~Bellemare.
\newblock Safe and efficient off-policy reinforcement learning.
\newblock In \emph{Advances in Neural Information Processing Systems}, pages
  1054--1062, 2016.

\bibitem[Murphy et~al.(2001)Murphy, van~der Laan, and Robins]{Murphy01MM}
S.~Murphy, M.~van~der Laan, and J.~Robins.
\newblock Marginal mean models for dynamic regimes.
\newblock \emph{Journal of American Statistical Association}, 96\penalty0
  (456):\penalty0 1410--1423, 2001.

\bibitem[Nachum et~al.(2019{\natexlab{a}})Nachum, Chow, Dai, and
  Li]{NacChoDaiLi19}
Ofir Nachum, Yinlam Chow, Bo~Dai, and Lihong Li.
\newblock {DualDICE}: Behavior-agnostic estimation of discounted stationary
  distribution corrections.
\newblock In \emph{Advances in Neural Information Processing Systems}, pages
  2315--2325, 2019{\natexlab{a}}.

\bibitem[Nachum et~al.(2019{\natexlab{b}})Nachum, Dai, Kostrikov, Chow, Li, and
  Schuurmans]{algae}
Ofir Nachum, Bo~Dai, Ilya Kostrikov, Yinlam Chow, Lihong Li, and Dale
  Schuurmans.
\newblock {AlgaeDICE}: Policy gradient from arbitrary experience,
  2019{\natexlab{b}}.

\bibitem[Precup et~al.(2000)Precup, Sutton, and Singh]{Precup00ET}
Doina Precup, Richard~S. Sutton, and Satinder~P. Singh.
\newblock Eligibility traces for off-policy policy evaluation.
\newblock In \emph{Proceedings of the 17th International Conference on Machine
  Learning}, pages 759--766, 2000.

\bibitem[Puterman(1994)]{puterman1994markov}
Martin~L Puterman.
\newblock \emph{Markov Decision Processes: Discrete Stochastic Dynamic
  Programming}.
\newblock John Wiley \& Sons, Inc., 1994.

\bibitem[Rockafellar(1974)]{rockafellar1974augmented}
R~Tyrrell Rockafellar.
\newblock Augmented {Lagrange} multiplier functions and duality in nonconvex
  programming.
\newblock \emph{SIAM Journal on Control}, 12\penalty0 (2):\penalty0 268--285,
  1974.

\bibitem[Tang et~al.(2020)Tang, Feng, Li, Zhou, and Liu]{tang20doubly}
Ziyang Tang, Yihao Feng, Lihong Li, Dengyong Zhou, and Qiang Liu.
\newblock Doubly robust bias reduction in infinite horizon off-policy
  estimation.
\newblock In \emph{Proceedings of the 8th International Conference on Learning
  Representations}, 2020.

\bibitem[Thomas et~al.(2015)Thomas, Theocharous, and Ghavamzadeh]{Thomas15HCPE}
P.~Thomas, G.~Theocharous, and M.~Ghavamzadeh.
\newblock High confidence off-policy evaluation.
\newblock In \emph{Proceedings of the 29th Conference on Artificial
  Intelligence}, 2015.

\bibitem[Uehara and Jiang(2019)]{uehara2019minimax}
Masatoshi Uehara and Nan Jiang.
\newblock Minimax weight and {Q}-function learning for off-policy evaluation.
\newblock \emph{arXiv preprint arXiv:1910.12809}, 2019.

\bibitem[Wang(2017)]{wang2017randomized}
Mengdi Wang.
\newblock Randomized linear programming solves the discounted {Markov} decision
  problem in nearly-linear (sometimes sublinear) running time.
\newblock \emph{arXiv preprint arXiv:1704.01869}, 2017.

\bibitem[Zhang et~al.(2020{\natexlab{a}})Zhang, Dai, Li, and
  Schuurmans]{zhang2020gendice}
Ruiyi Zhang, Bo~Dai, Lihong Li, and Dale Schuurmans.
\newblock {GenDICE}: Generalized offline estimation of stationary values.
\newblock In \emph{International Conference on Learning Representations},
  2020{\natexlab{a}}.

\bibitem[Zhang et~al.(2020{\natexlab{b}})Zhang, Liu, and
  Whiteson]{zhang2020gradientdice}
Shangtong Zhang, Bo~Liu, and Shimon Whiteson.
\newblock {GradientDICE}: Rethinking generalized offline estimation of
  stationary values.
\newblock \emph{arXiv preprint arXiv:2001.11113}, 2020{\natexlab{b}}.

\end{thebibliography}
\bibliographystyle{plainnat}

\clearpage
\newpage
\begin{appendix}
\thispagestyle{plain}
\begin{center}
{\huge Appendix}
\end{center}

\section{Robustness Justification}\label{appendix:primal_robust}

We explain the robustness interpretation of the dual regularization as the perturbation of Bellman differences. In this section, we elaborate the robustness interpretation of the primal regularization. For simplicity, we also consider $f_1\rbr{\cdot} = \rbr{\cdot}^2$. Therefore, we have $\preg\cdot\E_{\sampone}[f_1(\qvar(s,a))] = \preg\cdot\cbr{\max_{\delta\rbr{s, a}} \inner{Q}{\delta}- \E_{\sampone}\sbr{\delta^2\rbr{s, a}} }$. Plug the dual form into~\eqref{eq:lagrangian_reg} and with strong duality, we have
\begin{align}
\label{eq:ro_lagrangian_preg}
\nonumber \max_{\zeta\positive, \delta} \min_{\qvar, \norm} L_D(\zeta, \qvar, \norm, \delta) \defeq  & (1-\gamma)\cdot\E_{\initsamp}[\qvar(s_0,a_0)] + \preg \E_{\sampone}\sbr{\delta\rbr{s, a}\cdot Q\rbr{s, a}} + \norm \\
\nonumber &+ \E_{\samptwo}[\zeta(s,a)\cdot(\reward\cdot R(s,a) + \gamma \qvar(s',a') - \qvar(s,a) - \norm)] \\
  &- \preg\cdot\E_{\sampone}[\delta^2(s,a)] - \dreg\cdot \E_{\sampone}[f_2(\zeta(s,a))],
\end{align}
which can be understood as the Lagrangian of
\begin{eqnarray}
\max_{\zeta\positive, \delta}\,\, &\reward\E_{\sampone}\sbr{\zeta\rbr{s, a}\cdot R\rbr{s, a}} - \preg\cdot\E_{\sampone}[\delta^2(s,a)] - \dreg\cdot \E_{\sampone}[f_2(\zeta(s,a))] \nonumber \\
\st& \rbr{1 - \gamma} \mu_0\pi + \preg \visitrb\cdot\delta + \bellmant\cdot\rbr{\visitrb\cdot \zeta} = \rbr{\visitrb\cdot \zeta} \label{eq:relax_dual}\\
& \E_{\sampone}\sbr{\zeta} = 1. \nonumber
\end{eqnarray}
As we can see, the primal regularization actually introduces $L_2$-ball perturbations to the stationary state-action distribution condition~\eqref{eq:relax_dual}. For different regularization, the perturbations will be in different dual spaces. For examples, with entropy-regularization, the perturbation lies in the simplex. The corresponding optimization of~\eqref{eq:ro_lagrangian_dreg} is
\begin{eqnarray}\label{eq:relax_primal}
\min_{Q}\,\, &\rbr{1 - \gamma}\E_{\mu_0\pi}\sbr{Q\rbr{s, a}} + \preg\cdot \E_{\sampone}\sbr{f_1\rbr{Q}} + \dreg\cdot \E_{\sampone}\sbr{\delta^2\rbr{s, a}}\\
\st& Q\rbr{s, a}\ge R\rbr{s, a} +\bellman Q\rbr{s, a} - \dreg\delta\rbr{s, a}.
\end{eqnarray}
In both~\eqref{eq:relax_primal} and~\eqref{eq:relax_dual}, the relaxation of dual $\zeta$ in~\eqref{eq:relax_dual} does not affect the optimality of dual solution: the stationary state-action distribution is still the only solution to~\eqref{eq:relax_dual}; while in~\eqref{eq:relax_primal}, the relaxation of primal $Q$ will lead to different optimal primal solution. From this view, one can also justify the advantages of the dual OPE estimation.

\section{Proof for~\thmref{thm:reg_profile}}\label{appendix:reg_prof}
The full enumeration of $\preg,\dreg,\reward,\norm,$ and $\zeta\positive$ results in $2^5=32$ configurations. 
  We note that it is enough to characterize the solutions $Q^*,\zeta^*$ under these different configurations. Clearly, the primal estimator $\hat{\rho}_Q$ is unbiased when $Q^*=\qpi$, and the dual estimator $\hat{\rho}_\zeta$ is unbiased when $\zeta^*=\visitpi/\visitrb$.
  For the Lagrangian estimator $\hat{\rho}_{Q,\zeta}$, we may write it in two ways:
  \begin{align}
    \label{eq:lagrange-q}
    \hat{\rho}_{Q,\zeta}(\pi) &= \hat{\rho}_Q(\pi) + \sum_{s,a} \visitrb(s,a)\zeta(s,a)(R(s,a) + \gamma\bellmannog Q(s,a) - Q(s,a)) \\
    &= \hat{\rho}_{\zeta}(\pi) + \sum_{s,a} Q(s,a)((1-\gamma)\init(s)\pi(a|s) + \gamma\bellmantnog \visitrb\times\zeta(s,a) - \visitrb\times\zeta(s,a)).
    \label{eq:lagrange-z}
  \end{align}
  It is clear that when $Q^*=\qpi$, the second term of~\eqref{eq:lagrange-q} is 0 and $\hat{\rho}_{Q,\zeta}(\pi) = \rho(\pi)$. When $\zeta^*=\visitpi/\visitrb$, the second term of~\eqref{eq:lagrange-z} is 0 and $\hat{\rho}_{Q,\zeta}(\pi) = \rho(\pi)$. Therefore, the Lagrangian estimator is unbiased when 
  either $Q^*=\qpi$ or $\zeta^*=\visitpi/\visitrb$.

  Now we continue to characterizing $Q^*,\zeta^*$ under different configurations.
  First, when $\preg=0, \dreg=0$, it is clear that the solutions are always unbiased by virtue of~\thmref{thm:dual-succinct} (see also~\cite{algae}). When $\preg>0,\dreg>0$, the solutions are in general biased. We summarize the remaining configurations (in the discounted case) of $\preg>0,\dreg=0$ and $\preg=0,\dreg>0$ in the table below. We provide proofs for the configurations of the shaded cells. Proofs for the rest configurations can be found in~\cite{NacChoDaiLi19,algae}.
\begin{table}[h]
\caption{Optimal solutions for all configurations. Configurations with new proofs are shaded in gray.}
\renewcommand{\arraystretch}{1.2}
\setlength{\tabcolsep}{1pt}
\centering
\tiny
\begin{tabular}{c|c|c|c||c|c|c}
    \toprule        
\multicolumn{3}{l|}{ Regularizer (w./w.o. $\norm$) } &Case& $Q^*(s,a)$ & $\zeta^*(s,a)$ & $L(Q^*,\zeta^*)$\\\hline\hline

\rowcolor{gray!15}& \multirow{2}{*}{ $\reward=1$ } & $\zeta$ free &i & $Q^\pi$ & $\frac{d^\pi}{\visitrb}+\preg\frac{\rbr{\Ical - \gamma\Pcal_*^\pi}^{-1}\rbr{\visitrb\cdot f_1^{\prime}\rbr{Q^\pi}}}{\visitrb}$ & \shortstack{$\reward(1-\gamma)\cdot\E_{\mu_0}[\qpi] $\\$+ \preg\E_{(s,a)\sim \visitrb}\sbr{f_1\rbr{{Q^\pi}}}$} \\\hhline{>{\arrayrulecolor{gray!15}}-->{\arrayrulecolor{black}}-----}

\rowcolor{gray!15} $\dreg$ = 0 & & $\zeta\positive$ &ii& \shortstack{ $f_1^{*\prime}\bigg(\frac{1}{\preg}\bigg(\big(\preg f_1'\rbr{Q^\pi} + $\\$ \frac{\rbr{1 - \gamma}\mu_0\pi}{\visitrb} \big)_+ - \frac{\rbr{1 - \gamma}\mu_0\pi}{\visitrb}\bigg)\bigg)$ } & \shortstack{$\frac{1}{\visitrb}\rbr{\Ical - \bellman}^{-1}\cdot$\\$\visitrb\rbr{\preg f_1'\rbr{Q^\pi} + \frac{\rbr{1 - \gamma}\mu_0\pi}{\visitrb} }_+$} & \shortstack{$(1-\gamma)\cdot\E_{\mu_0}[\qvar^*] $\\$+ \E_{\visitrb}[\zeta^*(s,a)\cdot(\reward\cdot r $\\$ + \gamma \qvar^*(s',a') - \qvar^*(s,a))] $\\$+ \preg\cdot\E_{\visitrb}[f_1(\qvar^*(s,a))]$}\\\hhline{>{\arrayrulecolor{gray!15}}->{\arrayrulecolor{black}}------}

\rowcolor{gray!15} $\preg$ > 0 & & $\zeta$ free &iii&  & \multirow{6}{*}{ $\frac{d^\pi}{\visitrb}$~\citep{NacChoDaiLi19,algae}} & \\\hhline{>{\arrayrulecolor{gray!15}}-->{\arrayrulecolor{black}}-->{\arrayrulecolor{gray!15}}---}

\rowcolor{gray!15} & \multirow{-2}{*}{ $\reward=0$ } & $\zeta\positive$ &iv& \multirow{-2}{*}{$f^{*\prime}_1\rbr{0 }$} & & \multirow{-2}{*}{$-\preg f_1^*\rbr{0}$}\\\hhline{>{\arrayrulecolor{black}}----->{\arrayrulecolor{gray!15}}->{\arrayrulecolor{black}}-}

& \multirow{2}{*}{ $\reward=1$ } & $\zeta$ free &v&  & \\\cline{3-4}
$\dreg$ > 0 & & $\zeta\positive$ &vi& $-\dreg \rbr{\Ical - \Pcal^\pi}^{-1} f_2'(\frac{d^\pi}{\visitrb})$ && $\reward \cdot \E_{(s,a,r,s')\sim\visitrb}[r]$\\\cline{2-4}
$\preg$ = 0 & \multirow{2}{*}{ $\reward=0$ } & $\zeta$ free &vii&{$+\reward Q^\pi$ ~\citep{NacChoDaiLi19,algae}} && { $-\dreg\cdot D_f(d^\pi\|\visitrb)$~\citep{NacChoDaiLi19,algae}}\\\cline{3-4}
& & $\zeta\positive$ &viii& & \\
\bottomrule
\end{tabular}
\end{table}

\begin{proof}
Under our Assumptions~\ref{asmp:mdp_reg} and~\ref{asmp:bounded_ratio}, the strong duality holds for~\eqref{eq:lagrangian_reg}. We provide the proofs by checking the configurations case-by-case.

\begin{itemize}
  \item {\bf iii)-iv)} In this configuration, the regularized Lagrangian~\eqref{eq:lagrangian_reg} becomes
  \begin{eqnarray}\label{eq:lagrangian_primal_reg_noreward}
  \nonumber \max_{\zeta\positive} \min_{\qvar, \norm} L_D(\zeta, \qvar, \norm) \defeq  & (1-\gamma)\cdot\E_{\initsamp}[\qvar(s_0,a_0)] + \preg\cdot\E_{\sampone}[f_1(\qvar(s,a))] + \norm \\
  \nonumber &+ \E_{\samptwo}[\zeta(s,a)\cdot(\gamma \qvar(s',a') - \qvar(s,a) - \norm)],
  \end{eqnarray}
  which is equivalent to 
  \begin{eqnarray}\label{eq:lagrangian_primal_reg_noreward2}
  \nonumber \max_{\zeta\positive} \min_{\qvar} L_D(\zeta, \qvar) = 
  & \inner{\rbr{1 - \gamma}\mu_0\pi +  \bellmant\cdot\rbr{\visitrb\cdot \zeta} - \visitrb\cdot\zeta}{Q} + \preg\EE_{\visitrb}\sbr{f_1\rbr{Q}}\\
  &\st\quad \EE_{\visitrb}\sbr{\zeta} = 1.
  \end{eqnarray}

  Apply the Fenchel duality w.r.t. $Q$, we have 
  \begin{eqnarray}
  \max_\zeta& L_D\rbr{\zeta, Q^*} = -\preg\EE_{\visitrb}\sbr{ f_1^*  \rbr{\frac{ \rbr{1 - \gamma}\mu_0\pi +  \bellmant\cdot\rbr{\visitrb\cdot \zeta} - \visitrb\cdot\zeta }{\preg\visitrb} }}\\
  \st&\quad \EE_{\visitrb}\sbr{\zeta} = 1.
   \end{eqnarray}
  If $f^*_1\rbr{\cdot}$ achieves the minimum at zero, it is obvious that 
  $$
  \visitrb\cdot \zeta^* = \rbr{1 - \gamma}\mu_0\pi +  \bellmant\cdot\rbr{\visitrb\cdot \zeta^*}\Rightarrow \visitrb\cdot\zeta^* = d^\pi.
  $$
  Therefore, we have
  $$
  L\rbr{\zeta^*, Q^*} = -\preg f_1^*\rbr{0},
  $$
  and 
  \begin{eqnarray*}
  Q^* &=& \argmax_{Q} \inner{\rbr{1 - \gamma}\mu_0\pi +  \bellmant\cdot\rbr{\visitrb\cdot \zeta^*} - \visitrb\cdot\zeta^*}{Q} + \preg\EE_{\visitrb}\sbr{f_1\rbr{Q}}\\
  &=& f^{*\prime}_1\rbr{0 }
  \end{eqnarray*}

  \item {\bf i)-ii)} Following the derivation in case {\bf iii)-iv)}, we have the regularized Lagrangian as almost the same as~\eqref{eq:lagrangian_primal_reg_noreward} but has an extra term $\reward\EE_{\visitrb}\sbr{\zeta\cdot R}$, \ie
  \begin{eqnarray}\label{eq:lagrangian_primal_reg}
  \nonumber \max_{\zeta} \min_{\qvar} L_D(\zeta, \qvar) \defeq  & (1-\gamma)\cdot\E_{\initsamp}[\qvar(s_0,a_0)] + \preg\cdot\E_{\sampone}[f_1(\qvar(s,a))] \\
  \nonumber &+ \E_{\samptwo}[\zeta(s,a)\cdot(\reward\cdot R\rbr{s, a} + \gamma \qvar(s',a') - \qvar(s,a))].
  \end{eqnarray}
  We first consider the case where the $\zeta$ is free and the normalization constraint is not enforced.

  After applying the Fenchel duality w.r.t. $Q$, we have  
  \begin{eqnarray}
  \max_\zeta& L_D\rbr{\zeta, Q^*} = \reward\inner{\visitrb\cdot\zeta}{R} -\preg\EE_{\visitrb}\sbr{ f_1^*  \rbr{\frac{  \visitrb\cdot\zeta  -\rbr{1 - \gamma}\mu_0\pi -  \bellmant\cdot\rbr{\visitrb\cdot \zeta} }{\preg\visitrb} }}.
  \end{eqnarray}  
  We denote 
  \begin{align*}
  & \nu = \frac{ \visitrb\cdot\zeta  - \rbr{1 - \gamma}\mu_0\pi -  \bellmant\cdot\rbr{\visitrb\cdot \zeta} }{\visitrb} \\
  \Rightarrow \,\,\,\, & \visitrb\cdot \zeta = \rbr{\Ical - \bellmant}^{-1}\rbr{\rbr{1 - \gamma}\mu_0\pi + \visitrb\cdot\nu},
  \end{align*}
  and thus, 
  \begin{eqnarray*}
  \lefteqn{
  L_D\rbr{\zeta^*, Q^*} = \max_{\nu}\inner{\rbr{\Ical - \bellmant}^{-1}\rbr{\rbr{1 - \gamma}\mu_0\pi + \visitrb\cdot\nu}}{\reward R} - \preg \EE_{\visitrb}\sbr{f_1^*\rbr{\frac{\nu}{\preg}}}} \\
  &=& \reward \rbr{1 - \gamma}\EE_{\initsamp}\sbr{\qvar^\pi\rbr{s_0, a_0}} + \max_\nu \EE_{\visitrb}\sbr{\nu\cdot\rbr{\qvar^\pi}} - \preg \EE_{\visitrb}\sbr{f_1^*\rbr{\frac{\nu}{\preg}}},\\
  &=& \reward\rbr{1 - \gamma}\EE_{\initsamp}\sbr{\qvar^\pi\rbr{s_0, a_0}} + \preg\EE_{\visitrb}\sbr{f_1\rbr{{\qvar^\pi}}}
  \end{eqnarray*}
  where the second equation comes from the fact $\qvar^\pi = \rbr{\Ical-\bellman}^{-1}R$ and last equation comes from Fenchel duality with $\nu^* = \preg {f_1^{\prime}}^{}\rbr{\qvar^\pi}$. 

  Then, we can characterize 
  \begin{eqnarray*}
  \zeta^* &=& \frac{\rbr{\Ical - \bellmant}^{-1}\rbr{\rbr{1 - \gamma}\mu_0\pi }}{\visitrb} + \preg\frac{\rbr{\Ical - \bellmant}^{-1}\rbr{\visitrb\cdot f'_1\rbr{\qvar^\pi} }}{\visitrb}\\
  &=& \frac{d^\pi}{\visitrb} + \preg\frac{\rbr{\Ical - \bellmant}^{-1}\rbr{\visitrb\cdot f'_1\rbr{\qvar^\pi} }}{\visitrb},
  \end{eqnarray*}
  and
  \begin{eqnarray*}
  Q^* = \rbr{f_1'}^{-1}\rbr{\frac{  \visitrb\cdot\zeta^*  -\rbr{1 - \gamma}\mu_0\pi -  \bellmant\cdot\rbr{\visitrb\cdot \zeta^*}  }{\preg\visitrb}} = Q^{\pi}.
  \end{eqnarray*}

  If we have the positive constraint, \ie, $\zeta\ge 0$, we denote
  $$
  \exp\rbr{\nu} = \frac{\rbr{\Ical - \bellmant}\rbr{\visitrb\cdot\zeta}}{\visitrb}\Rightarrow \visitrb\cdot\zeta = \rbr{\Ical - \bellmant}^{-1}\visitrb\cdot \exp\rbr{\nu},
  $$
  then, 
  \begin{eqnarray*}
  L_D\rbr{\zeta^*, Q^*} = \max_{\nu} \EE_{\visitrb}\sbr{\exp\rbr{\nu}\cdot Q^\pi} - \preg \EE_{\visitrb}\sbr{f_1^*\rbr{\frac{1}{\preg}\rbr{\exp\rbr{\nu} - \frac{\rbr{1 - \gamma}\mu_0\pi}{\visitrb}}}}.
  \end{eqnarray*}
  By first-order optimality condition, we have
  \begin{eqnarray}\label{eq:zeta_positive_preg_reward}
  &&\exp\rbr{\nu^*}\rbr{Q^\pi - f_1^{*\prime}\rbr{\frac{1}{\preg}\rbr{\exp\rbr{\nu} - \frac{\rbr{1 - \gamma}\mu_0\pi}{\visitrb}}}} = 0
  \nonumber\\
  &=& \exp\rbr{\nu^*} = \rbr{\preg f_1'\rbr{Q^\pi} + \frac{\rbr{1 - \gamma}\mu_0\pi}{\visitrb} }_+ \nonumber \\
  &\Rightarrow& \visitrb\cdot \zeta^* = \rbr{\Ical - \bellman}^{-1}\cdot\visitrb\rbr{\preg f_1'\rbr{Q^\pi} + \frac{\rbr{1 - \gamma}\mu_0\pi}{\visitrb} }_+ \nonumber \\
  &\Rightarrow& \zeta^* = \frac{1}{\visitrb}\rbr{\Ical - \bellman}^{-1}\cdot\visitrb\rbr{\preg f_1'\rbr{Q^\pi} + \frac{\rbr{1 - \gamma}\mu_0\pi}{\visitrb} }_+.
  \end{eqnarray}
  For $Q^*$, we obtain from the Fenchel duality relationship,
  \begin{eqnarray}\label{eq:q_positive_preg_reward}
  Q^* &=& f_1^{*\prime}\rbr{\frac{1}{\preg}\rbr{\exp\rbr{\nu^*} - \frac{\rbr{1 - \gamma}\mu_0\pi}{\visitrb}}}\nonumber\\
  &=& f_1^{*\prime}\rbr{\frac{1}{\preg}\rbr{\rbr{\preg f_1'\rbr{Q^\pi} + \frac{\rbr{1 - \gamma}\mu_0\pi}{\visitrb} }_+ - \frac{\rbr{1 - \gamma}\mu_0\pi}{\visitrb}}}.
  \end{eqnarray}
  Then, the $L_D\rbr{\zeta^*, Q^*}$ can be obtained by plugging $\rbr{\zeta^*, Q^*}$ in~\eqref{eq:zeta_positive_preg_reward} and~\eqref{eq:q_positive_preg_reward}. Obviously, in this case, the estimators are all biased. 

  As we can see, in both {\bf i)} and {\bf ii)}, none of the optimal dual solution $\zeta^*$ satisfies the normalization condition. Therefore, with the extra normalization constraint, the optimization will be obviously biased.

  \item {\bf v)-viii)} These cases are also proved in~\citet{algae} and we provide a more succinct proof here. In these configurations, whether $\reward$ is involved or not does not affect the proof. We will keep this component for generality. We ignore the $\zeta\positive$ and $\norm$ for simplicity, the conclusion does not affected, since the optimal solution $\zeta^*$ automatically satisfies these constraints.  

  Consider the regularized Lagrangian~\eqref{eq:lagrangian_reg} with such configuration, we have
  \begin{eqnarray}\label{eq:lagrangian_dual_reg}
  \nonumber \min_{\qvar}\max_{\zeta}\,\,  L_D(\zeta, \qvar) \defeq  & (1-\gamma)\cdot\E_{\initsamp}[\qvar(s_0,a_0)] -\dreg\cdot \E_{\sampone}[f_2(\zeta(s,a))] \\
  &+ \E_{\samptwo}[\zeta(s,a)\cdot(\reward\cdot R(s,a) + \gamma \qvar(s',a') - \qvar(s,a) )].
  \end{eqnarray}
  Apply the Fenchal duality to $\zeta$, we obtain
  \begin{equation}\label{eq:lagrangian_dual_fenchel}
  \min_{\qvar}\,\, L_D\rbr{\zeta^*, \qvar}\defeq (1-\gamma)\cdot\E_{\initsamp}[\qvar(s_0,a_0)] + \dreg\EE_{\visitrb}\sbr{f_2^*\rbr{\frac{1}{\dreg} \rbr{\Bcal^\pi\cdot Q\rbr{s, a} - Q\rbr{s, a}}}},
  \end{equation}
  with $\Bcal^\pi\cdot Q\rbr{s, a}\defeq \reward\cdot R\rbr{s, a} + \gamma \Pcal^\pi Q\rbr{s, a}$. We denote $\nu\rbr{s, a} = \Bcal\cdot Q\rbr{s, a} - Q\rbr{s, a}$, then, we have
  $$
  Q\rbr{s, a} = \rbr{\Ical - \bellman}^{-1}\rbr{\reward\cdot R - \nu}. 
  $$
  Plug this into~\eqref{eq:lagrangian_dual_fenchel}, we have
  \begin{eqnarray}
\nonumber L_D\rbr{\zeta^*, \qvar^*}&=&\min_\nu\,\,  (1-\gamma)\cdot\E_{\initsamp}\sbr{\rbr{\rbr{\Ical - \bellman}^{-1}\rbr{\reward\cdot R-\nu}}(s_0,a_0)} \\\nonumber
  &&+ \dreg\EE_{\visitrb}\sbr{f_2^*\rbr{\frac{1}{\dreg} \nu\rbr{s, a}}},\\\nonumber
  &=& \reward\EE_{d^\pi}\sbr{R\rbr{s, a}}- \dreg\max_{\nu}\rbr{\E_{d^\pi}\sbr{\frac{\nu(s_0,a_0)}{\dreg} } + \EE_{\visitrb}\sbr{f_2^*\rbr{\frac{1}{\dreg} \nu\rbr{s, a}}}},\\
  &=& \reward\EE_{d^\pi}\sbr{R\rbr{s, a}} - \dreg D_f\rbr{d^\pi||\visitrb}
  \end{eqnarray}
  The second equation comes from the fact $d^\pi = \rbr{\Ical - \bellmant}^{-1}\rbr{\init\pi}$. The last equation is by the definition of the Fenchel duality of $f$-divergence. Meanwhile, the optimal $\frac{1}{\dreg}\nu^* = f_2^{\prime}\rbr{\frac{d^\pi}{\visitrb}}$. Then, we have
  \begin{eqnarray*}
  \qvar^* &=& -\rbr{\Ical - \bellman}^{-1}\nu^* + \rbr{\Ical - \bellman}^{-1}\rbr{\reward\cdot R}\\
  &=& -\dreg\rbr{\Ical - \bellman}^{-1}f_2^{\prime}\rbr{\frac{d^\pi}{\visitrb}} + \reward Q^\pi,
  \end{eqnarray*}
  and 
  \begin{eqnarray*}
  \zeta^*(s, a) &=& \argmax_{\zeta} \zeta\cdot \nu^*(s, a) - \dreg f_2\rbr{\zeta\rbr{s, a}}\\
  &=& f_2^{*\prime}\rbr{\frac{1}{\dreg}\nu^*\rbr{s, a}} = \frac{d^\pi\rbr{s, a}}{\visitrb\rbr{s, a}}.  
  \end{eqnarray*}

\end{itemize}

\end{proof}

\section{Recovering Existing OPE estimators}\label{appendix:recover_ope}
We verify the LSTDQ as a special case of the unified framework if the primal and dual are linearly parametrized, \ie, $\qvar\rbr{s, a} = w^\top \phi\rbr{s, a}$ and $\tau\rbr{s, a} = v^\top \phi\rbr{s, a}$, from any unbiased estimator  without $\xi\positive$ and $\norm$. For simplicity, we assume the solution exists. 

\begin{itemize}

  \item  When {$\rbr{\preg=1, \dreg=0, \reward=1}$}, we have the estimator as
  \begin{align}
  \nonumber \max_{v} \min_{w} L_D(v, w) \defeq  & (1-\gamma)\cdot w^\top\E_{\initsamp}[\phi(s_0,a_0)]  + \preg\cdot \E_{\sampone}[f_1(w^\top \phi(s,a))] \\
  \nonumber &+ v^\top\E_{\samptwo}[\phi(s,a)\cdot(\reward\cdot R(s,a) + \gamma w^\top \phi(s',a') - w^\top \phi(s,a))].
  \end{align}
  Then, we have the first-order optimality condition for $v$ as
  \begin{eqnarray*}
  && \E_{\samptwo}[\phi(s,a)\cdot(\reward\cdot R(s,a) + \gamma w^\top \phi(s',a') - w^\top \phi(s,a))] = 0,\\
  &\Rightarrow& w = {\underbrace{\E_{\samptwo}[\phi(s,a)\cdot(\phi(s,a) - \gamma  \phi(s',a'))]}_{\Xi}}^{-1}\E_{\rbr{s, a}\sim \visitrb}\sbr{\reward\cdot R\rbr{s, a}\phi\rbr{s, a}},\\
  &\Rightarrow& Q^*\rbr{s, a} = w^\top \phi\rbr{s, a},
  \end{eqnarray*}
  which leads to 
  \begin{eqnarray*}
  \hat{\rho}_Q(\pi) &=& (1-\gamma)\cdot \E_{\initsamp}[\hat\qvar(s_0,a_0)] \\
  &=& \rbr{1 - \gamma} \E_{\initsamp}[\phi\rbr{s, a}]^\top\Xi^{-1} \E_{\rbr{s, a}\sim \visitrb}\sbr{ R\rbr{s, a}\phi\rbr{s, a}}.
  \end{eqnarray*}

  \item  When {$\rbr{\preg=0, \dreg=1, \reward=\{0/1\}}$}, we have the estimator as
  \begin{align}
  \nonumber \max_{v} \min_{w} L_D(v, w) \defeq  & (1-\gamma)\cdot w^\top\E_{\initsamp}[\phi(s_0,a_0)]  - \dreg\cdot \E_{\sampone}[f_2(v^\top \phi(s,a))] \\
  \nonumber &+ v^\top\E_{\samptwo}[\phi(s,a)\cdot(\reward\cdot R(s,a) + \gamma w^\top \phi(s',a') - w^\top \phi(s,a))].
  \end{align}
  Then, we have the first-order optimality condition as
  \begin{eqnarray*}
  v^\top \E_{\samptwo}[\phi(s,a)\cdot(\gamma  \phi(s',a') - \phi(s,a))]  + (1-\gamma)\cdot \E_{\initsamp}[\phi(s_0,a_0)] = 0,
  \end{eqnarray*}
  which leads to 
  \begin{eqnarray*}
  v = (1-\gamma)\cdot \Xi^{-1} \E_{\initsamp}[\phi(s_0,a_0)].
  \end{eqnarray*}
  Therefore, the dual estimator is 
  \begin{eqnarray*}
  \hat\rho_\zeta\rbr{\pi} &=& \E_{(s,a,r)\sim\visitrb}\sbr{R\cdot\phi\rbr{s, a}}^\top v\\
  &=& \rbr{1 - \gamma} \E_{\initsamp}[\phi\rbr{s, a}]^\top\Xi^{-1} \E_{\rbr{s, a}\sim \visitrb}\sbr{ R\rbr{s, a}\phi\rbr{s, a}}.
  \end{eqnarray*}

  \item When $\rbr{\preg=1,\dreg=0, \reward=0}$, by the conclusion for~\eqref{eq:lagrangian_primal_reg_noreward}, we have
  $$
  v^\top \E_{\samptwo}[\phi(s,a)\cdot(\gamma  \phi(s',a') - \phi(s,a))]  + (1-\gamma)\cdot \E_{\initsamp}[\phi(s_0,a_0)] = 0,
  $$
  which leads to similar result as above case. 
\end{itemize}

\section{Alternative Biased Form}\label{appendix:alter_primal}

\paragraph{Unconstrained Primal Forms}

When $\alpha_\zeta>0$ and $\alpha_Q = 0$, the form of the Lagranian can be simplified to yield an optimization over only $Q$. 
Then, we may simplify,
\begin{multline}
  \max_{\zeta(s,a)} \zeta(s,a)\cdot(\alpha_R\cdot R(s,a) + \gamma \bellmannog\qvar(s,a) - \qvar(s,a)) - \alpha_\zeta\cdot f_2(\zeta(s,a)) \\ = \alpha_\zeta\cdot f_2^*\left(\frac{1}{\alpha_\zeta}(\alpha_R\cdot R(s,a) + \gamma\bellmannog\qvar(s,a) - \qvar(s,a))\right).
\end{multline}
So, the Lagrangian may be equivalently expressed as an optimization over only $\qvar$:
\begin{multline}
  \min_{\qvar} (1-\gamma)\cdot\E_{\initsamp}[\qvar(s_0,a_0)] + \alpha_Q\cdot\E_{\sampone}[f_1(\qvar(s,a))] \\
  + \alpha_\zeta\cdot \E_{\sampone}\left[f_2^*\left(\frac{1}{\alpha_\zeta}(\alpha_R\cdot R(s,a) + \gamma\bellmannog\qvar(s,a) - \qvar(s,a))\right)\right].
\end{multline}
We call this the \emph{unconstrained primal form}, since optimization is now exclusively over primal variables.
Still, given a solution $Q^*$, the optimal $\zeta^*$ to the original Lagrangian may be derived as,
\begin{equation}
  \zeta^*(s,a) = f_2^{*\prime}((\alpha_R\cdot R(s,a) + \gamma\bellmannog\qvar^*(s,a) - \qvar^*(s,a)) / \alpha_\zeta).
\end{equation}
Although the unconstrained primal form is simpler, in practice it presents a disadvantage, due to inaccessibility of the transition operator $\bellmannog$. That is, in practice, one must resort to optimizing the primal form as
\begin{multline}
  \min_{\qvar} (1-\gamma)\cdot\E_{\initsamp}[\qvar(s_0,a_0)] + \alpha_Q\cdot\E_{\sampone}[f_1(\qvar(s,a))] \\
  + \alpha_\zeta\cdot \E_{\samptwo}\left[f_2^*\left(\frac{1}{\alpha_\zeta}(\alpha_R\cdot R(s,a) + \gamma\qvar(s',a') - \qvar(s,a))\right)\right].
\end{multline}
This is in general a \emph{biased} estimate of the true objective and thus leads to biased solutions, as the expectation over the next step samples are taken inside a square function (we choose $f_2$ to be the square function). Still, in some cases (e.g., in simple and discrete environments), the bias may be desirable as a trade-off in return for a simpler optimization.

\paragraph{Unconstrained Dual Form}
We have presented an unconstrained primal form. Similarly, we can derive the unconstrianed dual form by removing the primal variable with a particular primal regularization $\alpha_Q \E_{\visitrb}\sbr{f_1\rbr{Q}}$. Then, we can simplify
\begin{align}
\nonumber\min_{Q\rbr{s', a'}} & \frac{1}{\visitrb\rbr{s', a'}}\rbr{1 - \gamma}\init(s')\pi\rbr{a'|s'}\cdot Q\rbr{s', a'}+ \alpha_Q f_1\rbr{Q}\\ 
\nonumber&+\frac{1}{\visitrb\rbr{s', a'}} \rbr{\gamma \int P^\pi\rbr{s', a'|s, a}\visitrb\cdot \zeta\rbr{s, a}dsda - \visitrb\rbr{s', a'}\zeta\rbr{s', a'}}\cdot Q\rbr{s', a'} \\
&= -\alpha_Q\cdot f^*_1\rbr{\frac{\visitrb\cdot\zeta - \rbr{1 - \gamma}\init\pi - \gamma \rbr{\bellmantnog\cdot \visitrb}\zeta}{\alpha_Q\visitrb}},
\end{align}
with ${Q^*} = f^{*\prime}_1\rbr{\frac{\visitrb\cdot\zeta - \rbr{1 - \gamma}\init\pi - \gamma \rbr{\bellmantnog\cdot \visitrb}\zeta}{\alpha_Q\visitrb}}$.

So, the regularized Lagrangian can be represented as
\begin{align}
\nonumber&\max_{d} \reward\EE_{\visitrb}\sbr{\zeta\cdot R}\\
&-\alpha_Q\EE_{\visitrb}\sbr{f^*_1\rbr{\frac{\visitrb\cdot\zeta - \rbr{1 - \gamma}\init\pi - \gamma \rbr{\bellmantnog\cdot \visitrb}\zeta}{\alpha_Q\visitrb}}}- \alpha_\zeta \EE_{\visitrb}\sbr{f_2\rbr{\zeta}}.
\end{align}
Similarly, to approximate the intractable second term, we must use 
\begin{align}
\nonumber&\max_{d} \alpha_R\EE_{\visitrb}\sbr{\zeta\cdot R} \\\nonumber&-\alpha_Q\EE_{\samptwo}\sbr{f^*_1\rbr{\frac{\zeta\rbr{s', a'} - \rbr{1 - \gamma}\init(s')\pi(a'|s') - \gamma \zeta\rbr{s, a}}{\alpha_Q\visitrb}}} - \alpha_\zeta \EE_{\visitrb}\sbr{f_2\rbr{\zeta}},
\end{align}
which will introduce bias.

\section{Undiscounted MDP}\label{appendix:undiscounted}
When $\gamma=1$, the value of a policy is defined as the average per-step reward:
\begin{equation}\label{eqn:avgstep-undiscount}
    \avgstep(\pi) \defeq \lim_{t_{\mathrm{stop}}\to\infty} \E\left[\frac{1}{t_{\mathrm{stop}}}\left.\sum_{t=0}^{t_{\mathrm{stop}}} \rew(s_t,a_t) ~\right| s_0\sim\init, \forall t, a_t\sim \pi(s_t), s_{t+1}\sim T(s_t, a_t)\right].
\end{equation}

The following theorem presents a formulation of $\rho(\pi)$ in the undiscounted case:
\begin{theorem}\label{thm:dual-succinct-undiscount}
Given a policy $\pi$ and a discounting factor $\gamma=1$, the value $\rho\rbr{\pi}$ defined in~\eqref{eqn:avgstep-undiscount} can be expressed by the following $d$-LP:
\begin{equation}\label{eq:dual-succinct-undiscount}
\textstyle\max_{d:S\times A\rightarrow \RR}\,\, \EE_{d}\sbr{\rew\rbr{s, a}},\quad \st,\quad d(s,a) = \bellmantnog d(s,a)\,\,\text{and}\,\,\sum_{s,a}\dvar(s,a) = 1.
\end{equation}
The corresponding primal LP under the undiscounted case is
\begin{equation}\label{eq:primal-succinct-undiscount}
\textstyle\min_{Q:S\times A\rightarrow \RR}\,\, \lambda,\quad \st,\quad   Q(s,a) = \rew(s,a) + \bellmannog Q(s,a) - \lambda.
\end{equation}
\end{theorem}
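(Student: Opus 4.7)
The plan is to mirror the structure of the proof of~\thmref{thm:dual-succinct}, treating the dual and primal LPs separately, and observing that strong duality then falls out automatically.

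For the dual LP~\eqref{eq:dual-succinct-undiscount}, I would invoke the $\gamma=1$ form of~\asmpref{asmp:mdp_reg} (together with~\citet{meyn2012markov} for continuous state spaces) to conclude that the system $d = \bellmantnog d$ combined with the normalization $\sum_{s,a} d(s,a) = 1$ pins down a unique solution: the stationary distribution $d^\pi$ of the Markov chain induced by $\pi$. Unlike the discounted case, the normalization constraint is essential here, because $\bellmantnog$ has $1$ in its spectrum and the fixed-point subspace of $\bellmantnog$ is one-dimensional. Once uniqueness is established, the feasible set is a singleton, and the objective evaluates to $\EE_{d^\pi}[\rew(s,a)]$, which coincides with $\rho(\pi)$ as defined in~\eqref{eqn:avgstep-undiscount} by the standard ergodic theorem.

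For the primal LP~\eqref{eq:primal-succinct-undiscount}, I would first verify feasibility by exhibiting the pair $(Q, \lambda) = (Q^\pi, \rho(\pi))$, where $Q^\pi(s,a) \defeq \sum_{t=0}^\infty \EE[\rew(s_t,a_t) - \rho(\pi) \mid s_0=s, a_0=a]$ is the differential (relative) $Q$-function, well-defined under ergodicity. This pair satisfies the Poisson equation $Q = \rew + \bellmannog Q - \lambda$ by a direct telescoping argument. To establish optimality of $\lambda = \rho(\pi)$, I would run a weak-duality-style computation: take the $d^\pi$-expectation on both sides of the constraint to obtain
\begin{equation*}
\EE_{d^\pi}[Q(s,a)] = \EE_{d^\pi}[\rew(s,a)] + \EE_{d^\pi}[\bellmannog Q(s,a)] - \lambda.
\end{equation*}
Since $d^\pi$ is stationary under $\bellmantnog$, and $\bellmantnog$ is the adjoint of $\bellmannog$, the $Q$ terms cancel and one reads off $\lambda = \EE_{d^\pi}[\rew(s,a)] = \rho(\pi)$. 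Thus \emph{every} feasible point has $\lambda$ pinned to $\rho(\pi)$, so the minimum equals $\rho(\pi)$ and strong duality between~\eqref{eq:dual-succinct-undiscount} and~\eqref{eq:primal-succinct-undiscount} is immediate.

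The main technical obstacle is the existence of a solution $Q^\pi$ to the Poisson equation. For finite or countable ergodic chains this is classical, but in continuous state spaces it rests on the machinery of~\citet{meyn2012markov}, which I would cite rather than reproduce. A secondary subtlety, worth noting but harmless, is that the primal $Q$ variable is only determined up to an additive constant: replacing $Q$ by $Q+c$ leaves the Poisson equation invariant. The adjoint-pairing argument above shows that this non-uniqueness does not affect $\lambda$, which is uniquely forced to $\rho(\pi)$ at every feasible point.
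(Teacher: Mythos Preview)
Your proposal is correct and follows essentially the same approach as the paper's (very terse) proof: for the dual LP you invoke ergodicity plus normalization to pin down $d^\pi$ uniquely, and for the primal LP you observe that $Q$ is determined only up to an additive constant while $\lambda$ is forced. Your treatment of the primal side is in fact more complete than the paper's---you explicitly exhibit feasibility via the Poisson equation and run the adjoint-pairing computation to show that every feasible $(Q,\lambda)$ has $\lambda=\rho(\pi)$, whereas the paper simply asserts that ``any optimal $Q^*$ with a constant offset would satisfy'' the constraint and leaves the reader to infer the rest.
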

\begin{proof}
With the additional constraint $\sum_{s,a}\dvar(s,a) = 1$ in~\eqref{eq:dual-succinct-undiscount}, the Markov chain induced by $\pi$ is ergodic with a unique stationary distribution $d^*=d^\pi$, so the dual objective is still $\rho\rbr{\pi}$ by definition. Unlike in the discounted case, any optimal $Q^*$ with a constant offset would satisfy~\eqref{eq:primal-succinct-undiscount}, so the optimal solution to~\eqref{eq:primal-succinct-undiscount} is independent of $Q$.
\end{proof}

\section{Experiment Details}\label{appendix:exp}
\subsection{OPE tasks}
For all tasks, We use $\gamma=0.99$ in all experiments except for the ablation study of normalization constraint where $\gamma=0.995$ and $\gamma=1$ are also evaluated. We collect $400$ trajectories for each of the tasks, and the trajectory length for Grid, Reacher, and Cartpole are 100, 200, and 250 respectively for $\gamma < 1$, or $1000$ for $\gamma = 1$. 

\paragraph{Grid.} We use a $10\times10$ grid environment where an agent can move left/right/up/down. The observations are the $x, y$ coordinates of this agent's location. The reward of each step is defined as $\text{exp}(-0.2|x-9|-0.2|y-9|)$. The target policy is taken to be the optimal policy for this task (i.e., moving all the way right then all the way down) plus $0.1$ weight on uniform exploration. The behavior policies $\pi_1$ and $\pi_2$ are taken to be the optimal policy plus $0.7$ and $0.3$ weights on uniform exploration respectively.

\paragraph{Reacher.} We train a deterministic policy on the Reacher task from OpenAI Gym~\citep{brockman2016openai} until convergence, and define the target policy to be a Gaussian with the pre-trained policy as the mean and $0.1$ as the standard deviation. The behavior policies $\pi_1$ and $\pi_2$ have the same mean as the target policy but with $0.4$ and $0.2$ standard deviation respectively.

\paragraph{Cartpole.} We modify the Cartpole task from OpenAI Gym~\citep{brockman2016openai} to infinite horizon by changing the reward to $-1$ if the original task returns termination and $1$ otherwise. We train a deterministic policy on this task until convergence, and define the target policy to be the pre-trained policy (weight $0.7$) plus uniform random exploration (weight $0.3$). The behavior policies $\pi_1$ and $\pi_2$ are taken to be the pre-trained policy (weight $0.55$ and $0.65$) plus uniform random exploration (weight $0.45$ and $0.35$) respectively.

\subsection{Linear Parametrization Details}
To test estimation robustness to scaling and shifting of MDP rewards under linear parametrization,
we first determine the estimation upper bound by parametrizing the primal variable as a linear function of the one-hot encoding of the state-action input. Similarly, to determine the lower bound, we parametrize the dual variable as a linear function of the input. These linear parametrizations are implemented using feed-forward networks with two hidden-layers of $64$ neurons each and without non-linear activations. Only the output layer is trained using gradient descent; the rest layers are randomly initialized and fixed. The true estimates where both primal and dual variables are linear functions are verified to be between the lower and upper bounds.

\subsection{Neural Network Details}
For the neural network parametrization, we use feed-forward networks with two hidden-layers of $64$ neurons each and ReLU as the activation function. The networks are trained using the Adam optimizer ($\beta_1=0.99$, $\beta_2=0.999$) with batch size $2048$. The learning rate of each task and configuration is found via hyperparameter search, and is determined to be $0.00003$ for all configurations on Grid, $0.0001$ for all configurations on Reacher, and $0.0001$ and $0.00003$ for dual and primal regularization on Cartpole respectively.

\section{Additional Results}

\subsection{Comparison to unregularized Lagrangian}
We compare the best performing DICE estimator discovered in our unified framework to directly solving the Lagrangian without any regularization or redundant constraints, \ie, DR-MWQL as primal, MWL as dual, and their combination~\citep{uehara2019minimax}. Results are shown in~\figref{fig:est_reg_org}. We see that the BestDICE estimator outperforms the original primal, dual and Lagrangian both in terms of training stability and final estimation. This demonstrates that regularization and redundant constraints are crucial for optimization, justifying our motivation. 
\begin{figure}[h]
\centering
  \begin{subfigure}{1.\columnwidth}
    \includegraphics[width=1.\linewidth]{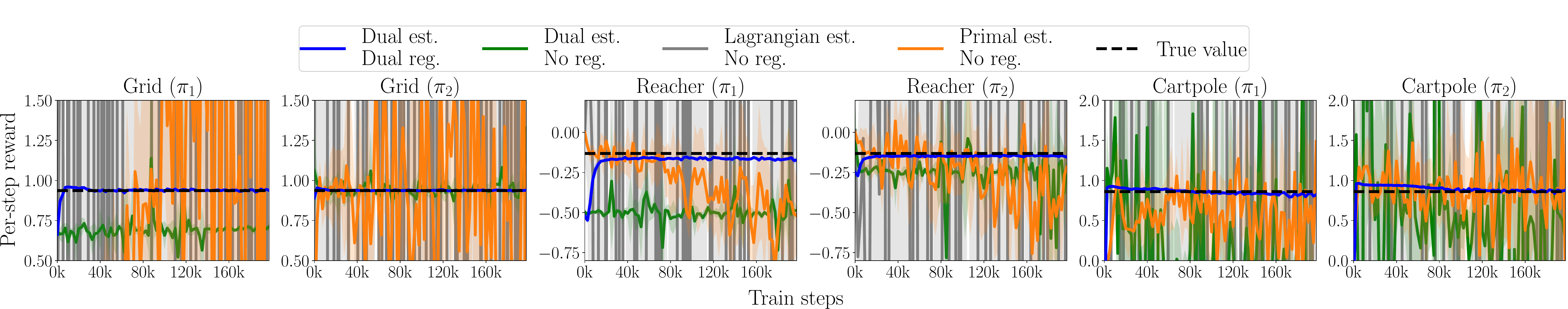}
  \end{subfigure}
  \caption{Primal (orange), dual (green), and Lagrangian (gray) estimates by solving the original Lagrangian without any regularization or redundant constraints, in comparison with the best DICE estimates (blue). }
\label{fig:est_reg_org}  
\end{figure}

\subsection{Primal Estimates with Target Networks}
We use target networks with double $Q$-learning~\citep{van2016deep} to improve the training stability of primal variables, and notice performance improvements in primal estimates on the Reacher task in particular. However, the primal estimates are still sensitive to scaling and shifting of MDP rewards, as shown in~\figref{fig:est_robust_target}.
\begin{figure}[h]
\centering
  \begin{subfigure}{1.\columnwidth}
    \includegraphics[width=1.\linewidth]{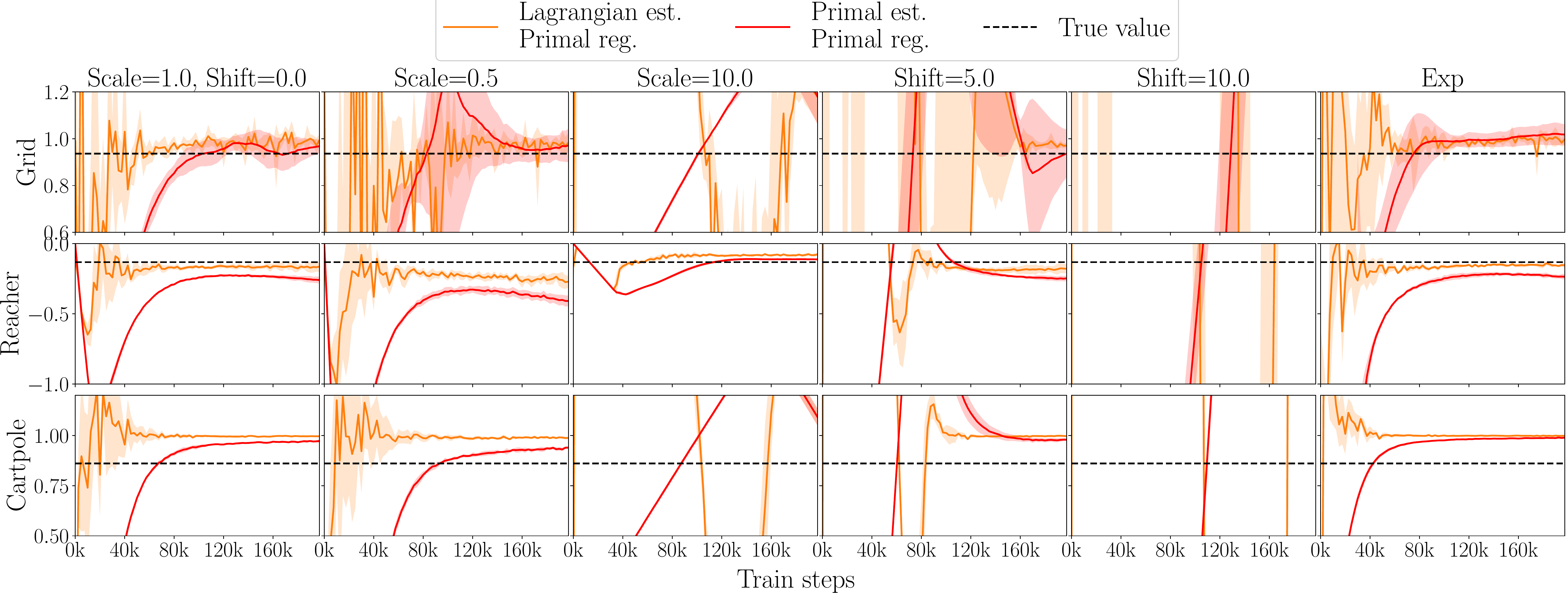}
  \end{subfigure}
  \caption{Primal (red) and Lagrangian (orange) estimates under the neural network parametrization with target networks to stabilize training when rewards are transformed during training. Estimations are transformed back and plotted on the original scale. Despite the performance improvements on Reacher compared to~\figref{fig:est_robust}, the primal and Lagrangian estimates are still sensitive to the reward values.}
\label{fig:est_robust_target}  
\end{figure}

\subsection{Additional Regularization Comparison}
In addition to the two behavior policies in the main text (i.e., $\pi_1$ and $\pi_2$), we show the effect of regularization using data collected from a third behavior policy ($\pi_3$). Similar conclusions from the main text still hold (i.e., dual regularizer is generally better; primal regularizer with reward results in biased estimates) as shown in~\figref{fig:reg_alph33}.
\begin{figure}[h]
\centering
\includegraphics[width=1.\linewidth]{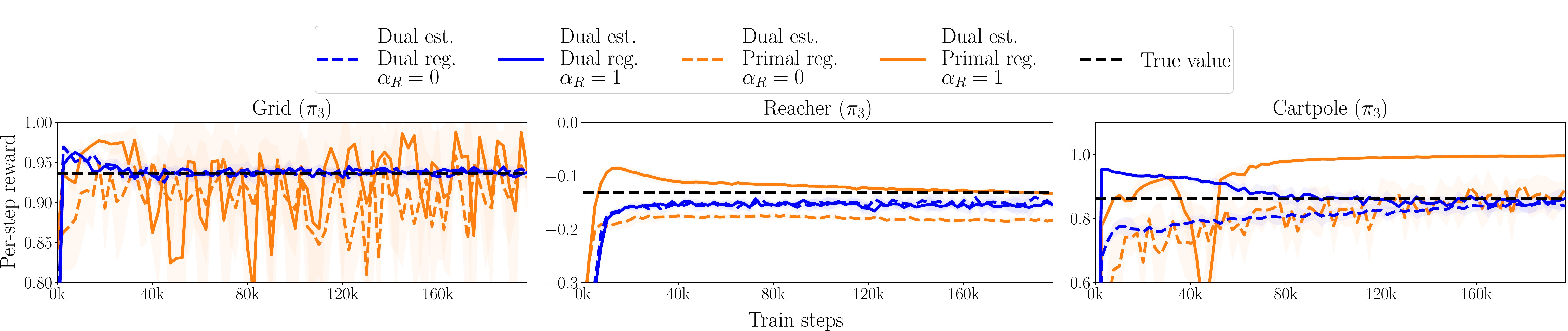}
\caption{Dual estimates when $\alpha_R = 0$ (dotted line) and $\alpha_R = 1$ (solid line) on data collected from a third behavior policy ($\pi_3$). Regularizing the dual variable (blue) is better than or similar to regularizing the primal variable (orange).}
\label{fig:reg_alph33}
\end{figure}

\subsection{Additional Ablation Study}
We also conduct additional ablation study on data collected from a third behavior policy ($\pi_3$). Results are shown in~\figref{fig:opt_alphas}. Again we see that the positivity constraint improves training stability as well as final estimates, and unconstrained primal form is more stable but can lead to biased estimates.
\label{sec:exp_opt_additional}
\begin{figure}[h]
  \begin{subfigure}{1.\columnwidth}
    \includegraphics[width=1.\linewidth]{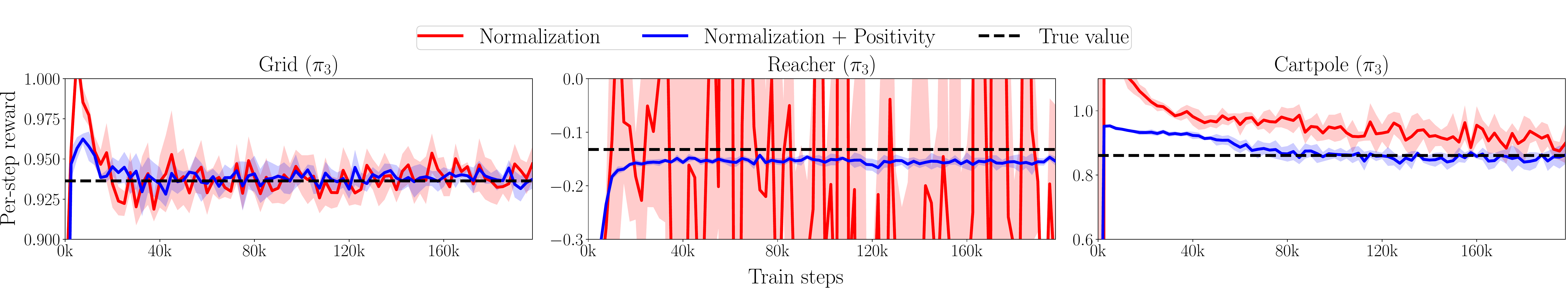}
  \end{subfigure}
  \begin{subfigure}{1.\columnwidth}
    \includegraphics[width=1.\linewidth]{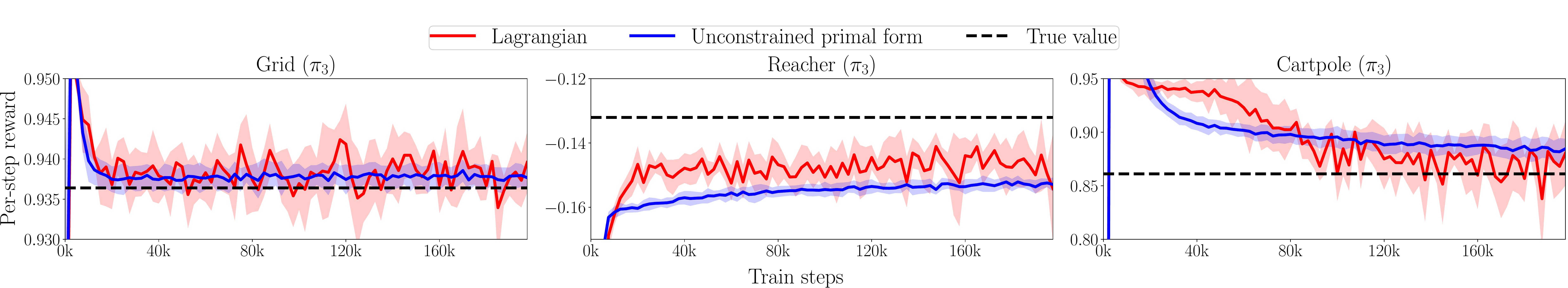}
  \end{subfigure}
  \caption{Apply positive constraint and unconstrained primal form on data collected from a third behavior policy ($\pi_3$). Positivity constraint (row 1) improves training stability. The unconstrained primal problem (row 2) is more stable but leads to biased estimates.}
\label{fig:opt_alphas}
\end{figure}

\end{appendix}

\end{document}